\def\1{\bm{1}}
\DeclareMathAlphabet{\mathsfit}{\encodingdefault}{\sfdefault}{m}{sl}
\SetMathAlphabet{\mathsfit}{bold}{\encodingdefault}{\sfdefault}{bx}{n}
\newcommand{\E}{\mathbb{E}}
\newcommand{\R}{\mathbb{R}}
\DeclareMathOperator*{\argmax}{arg\,max}
\DeclareMathOperator*{\argmin}{arg\,min}
\newcommand{\bftab}{\fontseries{b}\selectfont}
\newcommand{\mytilde}{\raise.17ex\hbox{$\scriptstyle\mathtt{\sim}$}}
\newcommand{\nosemic}{\renewcommand{\@endalgocfline}{\relax}}%
\newcommand{\dosemic}{\renewcommand{\@endalgocfline}{\algocf@endline}}%
\let\oldnl\nl %
\newcommand{\nonl}{\renewcommand{\nl}{\let\nl\oldnl}}%
\newcommand{\danica}[2][noinline]{\todo[color=violet!20,#1]{Danica: #2}}
\definecolor{dark2green}{rgb}{0.1, 0.65, 0.3}
\definecolor{dark2orange}{rgb}{0.9, 0.4, 0.}
\definecolor{dark2purple}{rgb}{0.4, 0.4, 0.8}
\newcommand{\first}[1]{\textbf{\textcolor{dark2green}{#1}}}
\newcommand{\second}[1]{\textbf{\textcolor{dark2orange}{#1}}}
\newcommand{\third}[1]{\textbf{\textcolor{dark2purple}{#1}}}
\newcommand{\C}{\mathcal{C}}
\newcommand{\D}{\mathcal{D}}
\newcommand{\indic}{\mathds{1}}
\newcommand{\one}{\mathbf{1}}
\newcommand{\T}{\mathcal{T}}
\newcommand{\tp}{^{\mathsf{T}}}
\newcommand{\U}{\mathcal{U}}
\newcommand{\w}{\mathbf w}
\newcommand{\zero}{\mathbf{0}}
\DeclareMathOperator{\Pick}{Pick}
\newcommand{\train} {\mathit{train}}
\newcommand{\eval}  {\mathit{eval}}
\newcommand{\deploy}{\mathit{deploy}}
\DeclareRobustCommand\onedot{\futurelet\@let@token\@onedot}
\def\onedot{. } %
\def\eg{\emph{e.g}\onedot}
\def\etal{\emph{et al}\onedot}
\begin{document}

\title{Exploring Active Learning in Meta-Learning: Enhancing Context Set Labeling} 

\titlerunning{Active Learning in Meta-Learning: Enhancing Context Set Labeling}

\author{Wonho Bae\inst{1} \and
Jing Wang\inst{1}\and
Danica J. Sutherland\inst{1,2}}

\authorrunning{W.~Bae et al.}
\institute{University of British Columbia, Vancouver \and
Alberta Machine Intelligence Institute (Amii) \\
\email{whbae@cs.ubc.ca}, \email{jing@ece.ubc.ca}, \email{dsuth@cs.ubc.ca}}

\maketitle

\begin{abstract}
  Most meta-learning methods assume that the (very small) context set used to establish a new task at test time is passively provided.
    In some settings, however, it is feasible to actively select which points to label; the potential gain from a careful choice is substantial, but the setting requires major differences from typical active learning setups.
    We clarify the ways in which active meta-learning can be used to label a context set, depending on which parts of the meta-learning process use active learning.
    Within this framework, we propose a natural algorithm based on fitting Gaussian mixtures for selecting which points to label; though simple, the algorithm also has theoretical motivation.
    The proposed algorithm outperforms state-of-the-art active learning methods when used with various meta-learning algorithms across several benchmark datasets.
  \keywords{Meta learning \and Active learning \and Low budget}
\end{abstract}

\section{Introduction}
\label{sec:intro}

Meta-learning has gained significant prominence as a substitute for traditional ``plain'' supervised learning tasks,
with the aim to adapt or generalize to new tasks given extremely limited data.
(Hospedales~\etal\cite{meta-survey} give a recent survey.)
There has been enormous success compared to learning ``from scratch'' on each new problem,
but could we do even better, with even less data?

One major way to improve data-efficiency in standard supervised learning settings
is to move to an \emph{active} learning paradigm,
where typically a model can request a small number of labels from a pool of unlabeled data;
these are collected, used to further train the model, and the process is repeated.
(Settles~\etal\cite{al2009settles} provides a classic overview, and Ren~\etal\cite{deep-al-survey} a more recent survey.)
Although each of these lines of research are well-developed,
their combination -- \textit{active meta-learning} -- has seen comparatively little research attention.
This intersection, however, is not only theoretically appealing but also has numerous practical applications.
For example, in medical imaging, there is often a large repository of labeled X-ray or MRI images. 
However, labeling these images requires manual annotations by radiologists or pathologists, which is time-consuming and costly. 
In manufacturing, there is often a large amount of sensor data generated during the production process. 
Annotating them to identify patterns indicating quality issues requires domain experts.

How can a meta-learner leverage an active learning setup to learn the best model possible, using only a few labels in its context sets?
We are aware of three previous attempts at active selection of context sets in meta-learning:
M\"uller~\etal\cite{fasl2022muller} and  Al-Shedivat~\etal\cite{hybrid_active_meta2021al} do so at meta-\emph{training} time for text and image classification,
while Boney~\etal\cite{active_fewshot_pn2017boney} do it at meta-\emph{test} time in semi-supervised few-shot image classification with ProtoNet~\cite{prototypical2017snell}.
``Active meta-learning'' thus means very different things in their procedures;
these approaches are also entirely different from work on active selection of \emph{tasks} during meta-training 
\cite{kaddour2020probabilistic,bamld2021nikoloska,active_task2022kumar}.
Our first contribution is therefore to clarify the different ways in which active learning can be applied to meta-learning, for differing purposes.%
\footnote{Note that work on meta-learning an active selection criterion for higher-label-budget problems -- \eg \cite{konyushkova2017learning,fang-etal-2017-learning} -- is essentially unrelated.}

We then confirm in extensive experiments that
no active learning method for context set selection seems to significantly help with final predictor quality at meta-training time -- aligning with previous observations by Setlur~\etal\cite{support2020setlur} and Ni~\etal\cite{aug_meta2021ni} --
but that active learning \emph{can} substantially help at meta-test time.
In particular, we propose a natural algorithm
based on fitting a Gaussian mixture model to the unlabeled data, using meta-learned feature representations;
though the approach is simple, we also give theoretical motivation.
We show that our proposed selection algorithm 
works reliably, and often substantially outperforms competitor methods
across many different meta-learning and few-shot learning tasks,
across a variety of benchmark datasets
and meta-learning algorithms.
Our contributions are summarized as follows.
\begin{itemize}
    \item We explore the concept of ``active meta-learning,'' pointing out that active selection can occur in several places (\cref{subsec:background_ml}) and highlighting challenges in traditional meta-learning setups regarding sample stratification (\cref{subsec:active_meta}).
    \item We identify that existing active learning algorithms, particularly low-budget active learning methods, perform poorly in active meta-learning
    (\cref{sec:experiments}),
    even if meta-learning is in the low-budget regime.
    \item We propose a method based on Gaussian mixture model using meta-learning specific features (\cref{subsec:feature-choice}), proven to yield a Bayes-optimal classifier under certain assumptions (or an efficient set cover in a general setting in \cref{subsec:gmm}).
    \item Our experiments show that the simple Gaussian mixture method consistently outperforms more complex active learning methods across various few-shot image classification (\cref{subsec:few_shot_image_cls,subsec:hybrid}), cross-domain classification (\cref{subsec:few_shot_cross_domain}), and meta-learning regression tasks (\cref{subsec:regression}), irrespective of the type of meta-learning algorithms employed.
\end{itemize}

\section{Meta-Learning: Background and Where to be ``Active''}
\label{subsec:background_ml}

We aim to learn a learning algorithm $f_\theta$,
a function which, given a dataset $\C$ consisting of pairs $(x, y) \in \mathcal X \times \mathcal Y$, returns $g := f_\theta(\C)$.
The function $g : \mathcal X \to \hat{\mathcal Y}$
is a classifier, regressor, or so on.
We evaluate the quality of $g$ using a loss function
$\ell : \hat{\mathcal Y} \times \mathcal Y \to \mathbb R$,
e.g.\ the cross-entropy or square loss:
\[
    \text{\emph{Empirical risk} of $g$ on $\T$:}\;
    \mathcal R_\ell(g, \T)
    =
    \frac{1}{\lvert \T \rvert} \sum_{(x, y) \in \T}
    \ell\left( g(x), y \right)
.\]
To find the $\theta$ which gives the best $g$s,
we assume we have access to distributions $\mathcal P^\train, \mathcal P^\eval$ over tasks
$\D \subseteq \mathcal X \times \mathcal Y$.
For each task, we will run $f_\theta$ on a \emph{context set}
$\C$, then evaluate the quality of the learned predictor on a disjoint \emph{target set} $\T$.
We call the distribution over possible $(\C, \T)$ pairs $\Pick_\theta(\D)$.\footnote{If we pick points by some deterministic process, $\Pick_\theta(\D)$ is a point mass.}
For instance, the default choice in passive meta-learning chooses, say, five random points per class for $\C$
and assigns the rest to $\T$,
ignoring $\theta$ and $x$.
Our aim is then,
\begin{gather}
    \intertext{\qquad Meta-training: find $\hat\theta$ using}
    \hat\theta \approx \argmin_\theta
    \E_{\D \sim \mathcal P^\train} \Bigg[ \E_{(\C, \T) \sim \Pick_\theta^\train(\D)} \Big[
    \mathcal R_{\ell^\train}\big(
        f_\theta(\C), \T
    \big) \Big] \Bigg]
\label{eq:meta_train}
.\end{gather}
Many algorithms have been proposed for meta-training (overviewed in \cref{subsec:meta_learning}).

To compare models based on $\mathcal P^\eval$, we might evaluate with a different loss.
For instance, it would be typical to use the 0-1 loss (corresponding to accuracy) for classification problems, despite training with cross-entropy.
\begin{gather}
    \text{Meta-testing: eval. $f_{\hat\theta}$ using}\;
    \E_{\widetilde{\D} \sim \mathcal P^\eval} \Bigg[ \E_{(\widetilde{\C}, \widetilde{\T}) \sim \Pick_{\hat\theta}^\eval(\widetilde{\D})} \Big[
    \mathcal R_{\ell^{\eval}}\big(
        f_{\hat\theta}(\widetilde{\C}), \widetilde{\T}
    \big) \Big] \Bigg]
\label{eq:meta_test}
.\end{gather}
Finally, in practice,
we might want to use a different selection scheme at deployment time.
For instance, in passive meta-learning,
one would typically use all available labeled data for context, not a random subset. 
Given a task $\breve{\D}$,
\begin{gather}
    \text{Deployment: 
    find a context set via $(\breve{\C}, \_) \sim \Pick_{\hat\theta}^\deploy(\breve{\D})$
    and use $f_\theta(\breve{\C})$}
\label{eq:meta_deploy}
.\end{gather}

\subsection{Active Selection of Context in Meta Learning}
\label{subsec:active_meta}

There are several places where active learning can be applied during meta-learning.
In the meta-training phase \eqref{eq:meta_train},
we could actively choose tasks $\D$,
and/or have $\Pick_\theta^\train$ actively select points for $\C$ and/or $\T$.
At meta-testing time \eqref{eq:meta_test},
we could have $\Pick_\theta^\eval$ actively select points for $\widetilde{\C}$ and/or $\widetilde{\T}$;
we might also actively choose $\widetilde{\D}$ to use labels efficiently, similarly to active surveying \cite{garnett2012bayesian}.
At deployment time \eqref{eq:meta_deploy},
$\Pick_\theta^\deploy$ might actively choose a context set $\breve\C$ to label.

Actively selecting $\D$, $\widetilde{\D}$, $\T$, and/or $\widetilde{\T}$ is interesting to minimize the label burden (or, possibly, computational cost) of meta-training
\cite{kaddour2020probabilistic,bamld2021nikoloska,active_task2022kumar}.
We assume here, however,
that $\mathcal P^\train$ and $\mathcal P^\eval$ are based on already-labeled datasets.

Instead, we are primarily concerned with the labeling burden at deployment time, and so our final goal is to actively select $\breve{\C}$ with $\Pick_\theta^\deploy$ to find the best predictor.
To evaluate how well we should expect our algorithms to perform at this task, we choose $\Pick_\theta^\eval = \Pick_\theta^\deploy$; thus, we actively select $\widetilde\C$.

Should we expect this to help?
Efficient approaches for data selection in meta-learning have not yet received much research attention.
Setlur~\etal\cite{support2020setlur} suggest that context set diversity is empirically not particularly helpful for meta-learning,
and Ni~\etal\cite{aug_meta2021ni} show that data augmentation on context sets is not very useful either.
Pezeshkpour~\etal\cite{active_fewshot2020pezeshkpour} further provide some evidence using label information that there is not much room to improve few-shot classification with active learning. 
Agarwal~\etal\cite{sensitivity2021agarwal}, however, argue against these previous findings, showing that adversarially selected context sets, at both training and test time, significantly change the classification performance.
Their approach is not applicable in practice since it requires full label information,
but may suggest there is room to improve meta-learning algorithms with better context sets.

Muller~\etal\cite{fasl2022muller} and Al-Shedivat~\etal\cite{hybrid_active_meta2021al} compare traditional active learning algorithms for few-shot text and image classification at training time, i.e.\ active $\Pick_\theta^\train$, passive $\Pick_\theta^\eval$.
Boney~\etal\cite{active_fewshot_pn2017boney} instead compare active learning algorithms inside $\Pick_\theta^\eval$, specifically when $f_\theta$ is a ProtoNet, %
with passive $\Pick_\theta^\train$.
(They do this in semi-supervised few-shot image classification; more discussion on the relationship of active meta-learning to active semi-supervised few-shot learning is provided in \cref{app:sec:semi}.)
Active $\Pick_\theta^\train$ and $\Pick_\theta^\eval$ are both feasible settings,
but as argued above, if we are concerned with performance of our deployed predictor
we should use an active $\Pick_\theta^\eval = \Pick_\theta^\deploy$.
One can choose $\Pick_\theta^\train$ to be active or not, depending on which learns better predictors;
we show in \cref{app:sec:train_time} that active $\Pick_\theta^\train$ does not seem to help.

\begin{figure}[t!]
    \centering
    \includegraphics[width=0.9\textwidth]{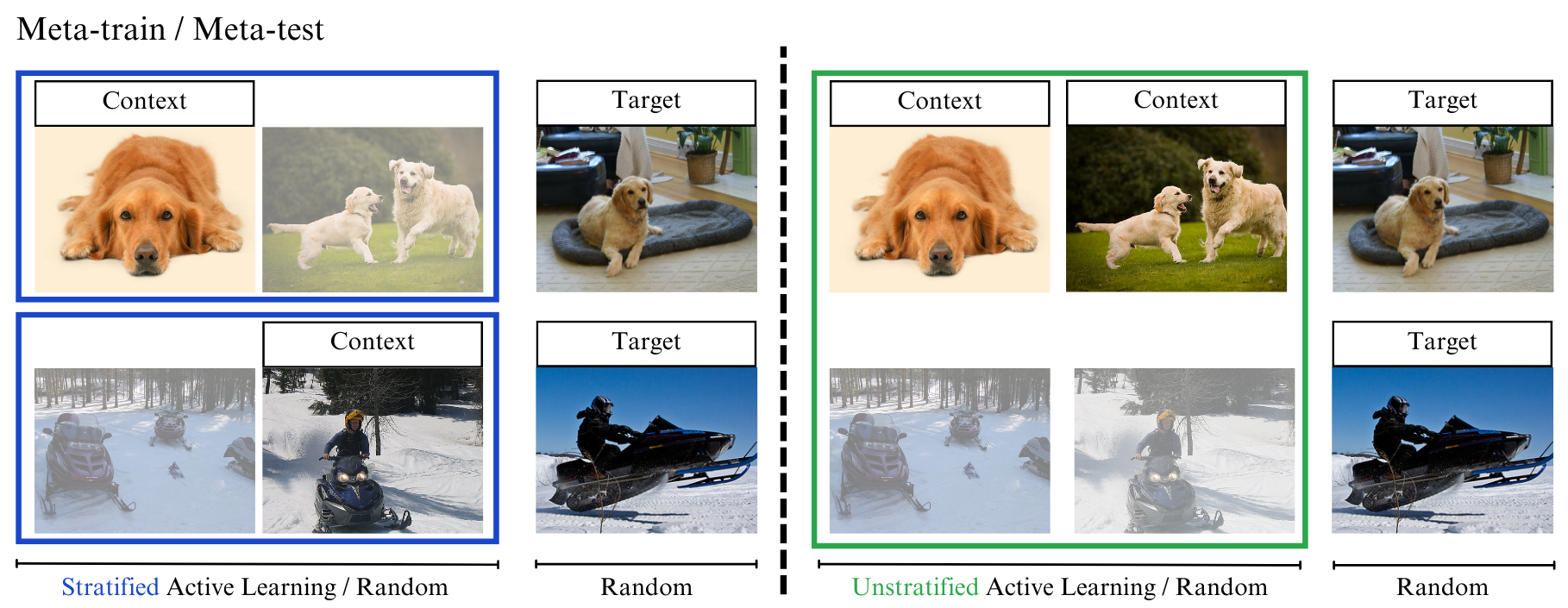}
    \caption{Meta-training process. $\Pick_\theta$ can be stratified or unstratified, active or passive.}
    \label{fig:active_meta}
\end{figure}

\subsubsection{Stratification}
In passive few-shot classification,
the $\Pick$ functions typically choose context points according to a \emph{stratified} sample:
for one-shot classification, $\C$ contains exactly one point per class.
This is because, if we take a uniform random sample of size $N$ for an $N$-way classification problem,
$\C$ is unlikely to contain all the classes,
making classification very difficult.
Assuming ``nature'' gives a stratified uniform sample, as in nearly all work on few-shot classification, also seems reasonable.

In pool-based active settings, however,
it is highly unreasonable to assume that $\Pick_\theta^\deploy$ can be stratified (as illustrated on the left side of \cref{fig:active_meta}):
to do so, we would need to know the label of every point in $\breve\D$,
in which case we should simply use all those labels.
As we would like $\Pick_\theta^\eval = \Pick_\theta^\deploy$, eval-time stratification is then not particularly reasonable;
even so, we do report such results per the standards of meta-learning.
When $\Pick_\theta^\deploy$ is unstratified (as in the right side of \cref{fig:active_meta}),
it is particularly important for the selection criterion to find samples from each class.

Train-time stratification with unstratified evaluation does not leak data labels, and is
plausible when $\mathcal P^\train$ and $\mathcal P^\eval$ are fully labeled.
Since this approach trains $f_\theta$ in an ``easy'' setting and evaluates it in a ``hard'' one, however,
we will see it tends to slightly underperform the fully-unstratified default.
Regression tasks are not typically stratified;
we do not stratify for regression experiments.

\subsection{Related Work: Meta-Learning algorithms}
\label{subsec:meta_learning}
Meta-learning algorithms can be divided into several categories;
all will be applicable for our active learning strategies,
and we evaluate with at least one representative algorithm per category.

\textbf{Metric-based methods}
learn a representation space encoding a ``good'' similarity, where simple classifiers work well \cite{matching2016vinyals,tadam2018oreshkin}.
ProtoNet~\cite{prototypical2017snell} finds features so that points from each class are close to the prototype feature of the class.

\textbf{Optimization-based methods}
use $f_\theta$ that incorporate optimization, e.g.\ gradient descent
as in MAML~\cite{maml2017finn,maml++2018antoniou}, which seeks parameters $\theta$ 
such that gradient descent quickly finds a useful model on a new task.
ANIL~\cite{anil2019raghu} freezes most of the network and only updates the last layer,
while R2D2~\cite{r2d2} and MetaOptNet~\cite{metaopt2019lee} replace the last layer with a convex problem whose solution can be differentiated;
these approaches can improve both performance and speed.

\textbf{Model-based methods}
learn a model that explicitly adapts to new tasks,
typically by modeling the distribution of $y$ from $\T$ given its $x$ values and $\C$.
The most prominent family of methods is
Neural Processes (NPs)~\cite{np2018garnelo,npf2020dubois},
which encode a context set and estimate task-specific distribution parameters. 
Conditional NPs can have issues with underfitting
\cite{cnp2018garnelo2018,np2018garnelo},
but
AttentiveNPs~\cite{attnnp2019kim}
and ConvNPs~\cite{convnp2019gordon}
are more robust.
They are more commonly used for regression.

\textbf{Pre-training methods},
such as SimpleShot \cite{simpleshot2019wang}
and Baseline++ \cite{baseline++2019chen},
are based on
repeated demonstrations~\cite{qda2021zhang,unraveling2020goldblum} that simply pre-training a multi-class model can surpass the performance of commonly used meta-learners.

\section{Active Learning}
\label{sec:active_learning}

In pool-based active learning, a model requests labels for the most ``informative'' data points from a pool of unlabeled data. 
The key question is how to estimate which data points will be informative.

\subsection{Related Work: Existing Active Learning Methods}

\subsubsection{Uncertainty-based methods}
Simple but effective uncertainty-based methods such as maximum entropy~\cite{entropy2014wang}, least confident~\cite{al2009settles}, and margin sampling~\cite{margin2001scheffer} are widely used for active learning.
Since they only consider current models' uncertainty, active learning strategies that consider expected changes in model parameters~\cite{egl2007settles,badge2019ash} and model outputs~\cite{eer_gf2003zhu,eer_mi2007guo,eer2001roy,emoc_reg2018kading,emoc2014frey,emoc2016kading,bemps2021tan,mohamadi:active-ntk} have been also been proposed. 
However, recent analyses have empirically demonstrated that at least in certain experimental settings, most active learning methods are not significantly different from one another~\cite{practice_al2021lang}, and may not even improve over random selection~\cite{reproduce_al2022munjal}.
We consider the following methods from this category:
\begin{description}[itemsep=2pt,parsep=2pt,leftmargin=!,labelindent=1em,labelwidth=5em,itemindent=4em,labelsep=1ex,]
    \item[Random] Uniformly randomly samples a context set from unlabeled set $\mathcal{U}$.
    \item[Entropy]
    Add a point to the context set based on $x^* = \argmax_{x \in \mathcal{U}} H(\hat y(x) \mid x)$, where $H(\cdot)$ is Shannon entropy~\cite{entropy2014wang}.
    Other than in \cref{app:sec:sequential},
    we apply this in ``batch mode,''\danica{update discussion}{} i.e.\ we do not observe points one-by-one but rather choose the $\lvert\C\rvert$ points with the highest ``initial'' entropy.\footnote{%
        Traditional active learning methods would generally retrain between each step,
        requiring a back-and-forth labeling process not needed by the methods discussed shortly.
        In modern deep learning settings, this is almost never done due to the expense of retraining;
        ``batch-mode'' entropy is still excellent in those settings \cite{practice_al2021lang,mohamadi:active-ntk}.
        \Cref{app:sec:sequential} explores more frequent retraining;
        the takeaway results are overall similar to the rest of our experiments.}
    \item[Margin] Add a point to $\C$ based on $x^* = \argmin_{x \in \mathcal{U}} p_1(y|x) - p_2(y|x)$, where $p_1$ and $p_2$ denote the first and second highest predicted probabilities, respectively~\cite{margin2001scheffer}. 
        We also run this method in ``batch mode.''
\end{description}

Although Entropy and Margin are very simple and fast to evaluate, no uncertainty-based method seems to substantially outperform them on typical active image classification tasks (see \eg \cite{mohamadi:active-ntk}),
and we will see that other methods are unlikely to be competitive in low-budget regimes.

\subsubsection{Low-budget active learning}
The limitations of typical active learning approaches may especially apply in very-low-budget cases,
such as those considered in few-shot classification and meta-learning.
In particular, when the ``current'' model is quite bad, using it to choose points might be counterproductive.
In the one-shot case especially, standard active learning methods simply do not apply.

Recently, several papers have have proposed novel active learning algorithms for these settings; none of these papers focused on meta-learning,
but should be broadly applicable since meta-learning is also a low-budget setting.
Rather than picking \eg the points about which a model is least certain,
these papers propose to label the ``most representative'' data points independently of a ``current'' model.
\begin{description}[itemsep=2pt,parsep=2pt,leftmargin=!,labelindent=1em,labelwidth=5em,itemindent=4em,labelsep=1ex,]
    \item[DPP] Determinantal Point Processes (DPPs) 
    query diverse samples, by selecting a subset that maximizes the determinant of a similarity matrix~\cite{batch_dpp2019biyik}.
    \item[Typiclust]
    Run $k$-means on the unlabeled data points, where $k = \lvert \C \rvert$ is the annotation budget.
    Select one data point per cluster such that the distance between a data point and its $k'$ nearest neighbors is minimized: $\argmin_{x \in \mathcal{U}} \sum_{x' \in \operatorname{NN}_{k'}(x)} \lVert x - x' \rVert_2 $~\cite{typiclust2022highlow}.
    \item[Coreset] Select a subset of the unlabeled set $\mathcal{U}$ to approximately minimize the distance from unlabeled data points to their nearest labeled point~\cite{coreset2017sener}. %
    \item[ProbCover] Select data points that roughly maximize the number of unlabeled points within a distance of $\delta$ from any labeled point, where $\delta$ is chosen according to a ``purity'' heuristic~\cite{probcover2022yehuda}; see \cref{app:sec:prob_cover} for more details.
\end{description}

\subsection{Features for Representative-Selection Methods}
\label{subsec:feature-choice}
Notions of the ``most representative'' data points
are highly dependent on a reasonable metric of data similarity.
Prior methods operated either on raw data
-- typically a poor choice for complex datasets like natural images --
or, in semi-supervised settings as in ProbCover %
and Typiclust, %
on SimCLR~\cite{simclr2020chen} features learned on the unlabeled data.

In metric-based meta-learning,
we propose to instead use the current meta-learned representation;
choosing points representative for the features we will use downstream is the natural choice.
In MAML, the most natural equivalent might be features from the empirical neural tangent kernel (NTK)~\cite{linntk2019lee} of the current initialization network;
this approximates what will happen when the network is trained on $\C$,\footnote{
    Theoretical results about the NTK technically depend on a random initialization, which is not the case here.
    Mohamadi~\etal\cite{mohamadi:active-ntk}
    provide some assurance in that if the initialization were obtained by gradient descent on some dataset, the results would still hold, but MAML finds initial parameters differently.
    }
and so is perhaps the best simple understanding of ``how this network views the data.''
Even empirical NTKs are often expensive to evaluate, however,
and we thus propose to instead use
features from the penultimate layer of the initialization neural net $f_\theta(\{\})$,
corresponding to the NTK of a model that only retrains its last layer (as in ANIL, R2D2, and MetaOptNet).
We also use the penultimate-layer reperesentations of $f_\theta(\{\})$ for NP-based meta-learning.

Experiments in \cref{app:sec:self_sup} show that this proposal outperforms off-the-shelf self-supervised features like SimCLR.

\subsection{Gaussian Mixture Selection for Low-Budget Active Learning}
\label{subsec:gmm}

We propose the following very simple algorithm for low-budget active learning:
fit a mixture of $k$ Gaussians to the unlabeled data features,
where $k$ is the label budget,
using EM with a $k$-means initialization.
We use a shared diagonal covariance matrix (more details about EM are provided in \cref{app:sec:gmm}).
Once a mixture is fit,
we select the highest-density point from each component:
\begin{align}
    x^* = \argmin_{x \in \mathcal U} (x - \mu_j)\tp \Sigma^{-1} (x - \mu_j) 
\text{ for each } j \in [k].
\end{align}
The proposed method is summarized in \cref{algo:ours}.
For metric-based meta-learning,
the motivation of this algorithm is clear:
we want labeled points that approximately ``cover'' the data points.
Our notion of a ``cover'' is somewhat different from that of Coreset \cite{coreset2017sener} or ProbCover \cite{probcover2022yehuda};
we avoid ProbCover's need for a fixed radius,
which we show can lead to poor choices (see \cref{app:sec:prob_cover}),
and are more concerned with ``average'' covering (and hence perhaps less sensitive to outliers) than Coreset.
The quality of selected data points from those methods are compared for a few metrics in \cref{app:fig:al_quality}.
On ANIL and MetaOptNet:
since $\lvert \C \rvert$ is at most, say, 50 (in $10$-way $5$-shot)
and the feature dimension is typically at least 100,
ANIL becomes approximately the same multi-class max-margin separator obtained by (unregularized) MetaOptNet.\footnote{%
    For reasonable distributions and networks,
    $\C$ is almost surely linear separable;
    thus ANIL, which is gradient descent for logistic regression, will converge to the multi-class max-margin separator \cite{soudry:implicit}.}
Intuitively, as $\lvert\C\rvert$ grows, the means of an isotropic Gaussian mixture converge to roughly a covering set for the dataset $\U$,
and the max-margin separator of a set cover for $\U$ will be similar to the max-margin separator for all of the data.
Even in various cases when $\lvert\C\rvert \ll \lvert\U\rvert$,
choosing the means yields a max-margin separator that generalizes well.

\Cref{fig:max_margin} illustrates that,
if class-conditional data distributions are isotropic Gaussians with the same covariance matrices,
labeling the cluster centers can be far preferable to
labeling a random point from each cluster.
This is backed up by the following theoretical results,
which are all proved in \cref{app:sec:max-margin}.

\setlength{\textfloatsep}{0pt}
\begin{algorithm}[t!]
    \newlength{\commentWidth}
    \setlength{\commentWidth}{7cm}
    \newcommand{\atcp}[1]{\tcp*[r]{\makebox[\commentWidth]{#1\hfill}}}
    \SetKwInput{KwInput}{Input}
    \SetKwInput{MetaTrain}{In meta-train}
    \SetKwInput{MetaTest}{In meta-test}
    \KwInput{Selection distribution $\Pick_\theta^{\{\train,\eval\}}$, a learning algorithm $f_\theta$, empirical risk $\mathcal R_\ell$, and the size of context sets $k$ }
   Find $\hat\theta$ using \cref{eq:meta_train} where $\Pick_\theta^\train$ may be stratified \DontPrintSemicolon \tcp*{\textcolor{blue}{Meta-train}}
    \While(\tcp*[f]{\textcolor{blue}{Meta-test}} ){\emph{task for evaluation exists}}{
    $\tilde{\D} \sim P^\eval$ and sample $\tilde{\T}$ from $\tilde{\D}$\\
    Fit GMM: $\{(\hat{\pi}_j, \hat{\mu}_j, \hat{\Sigma}_j) \}_{j=1}^k$ using \cref{app:eq:gmm_obj}--\labelcref{app:eq:m_step} in \cref{app:sec:gmm} \\
    Select $\{x_j^*\}_{j=1}^k$ such that $\forall j \in [k]$, $x_j^* = \argmin_{x \in \mathcal{X}} (x - \hat{\mu}_j)^T\hat{\Sigma}_j(x - \hat{\mu}_j)$\\
    Annotate $\{x_j^*\}_{j=1}^k$ to create $\tilde{\C}$, and evaluate $f_{\hat{\theta}}$ using $R_{\ell}\big(f_{\hat{\theta}(\tilde{\C})}, \tilde{\T} \big)$
    }
    \caption{GMM-based Active Meta-learning}
    \label{algo:ours}
\end{algorithm}

\begin{restatable}{proposition}{maxmargin}
    \label{lemma:orthonormal-maxmarg}
    Suppose that $\{ x_i \}_{i=1}^N$ are orthonormal.
    Then, the solution to \eqref{eq:multiclass-svm}
    with the dataset $\{ (x_y, y) \}_{y=1}^N$
    is given by $w_y = x_y - \frac1N \sum_{i=1}^N x_i$,
    and hence
    \begin{align}
        \text{for any } x,\quad
        \argmax_y w_y\tp x = \argmin_y \lVert x - x_y \rVert
    .\end{align}
\end{restatable}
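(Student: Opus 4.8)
The plan is to first identify \eqref{eq:multiclass-svm} as the hard-margin Crammer--Singer multiclass SVM, i.e.\ $\min_{w_1,\dots,w_N}\tfrac12\sum_y\lVert w_y\rVert^2$ subject to $w_y\tp x_y - w_{y'}\tp x_y \ge 1$ for all $y\ne y'$ (one labeled point $x_y$ per class $y$). Because the objective is strictly convex and the constraints are linear, the KKT conditions are necessary and sufficient and the minimizer is unique, so it suffices to exhibit a primal point together with dual multipliers satisfying KKT.

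Concretely, I would guess the symmetric candidate $w_y = x_y - \bar x$ with $\bar x = \tfrac1N\sum_{i=1}^N x_i$, paired with the uniform dual $\alpha_{y,y'} = 1/N$ for every ordered pair $y\ne y'$. Stationarity for this problem reads $w_y = \bigl(\sum_{y'\ne y}\alpha_{y,y'}\bigr)x_y - \sum_{y'\ne y}\alpha_{y',y}x_{y'}$; plugging in $\alpha\equiv 1/N$ gives $\tfrac{N-1}{N}x_y - \tfrac1N\sum_{y'\ne y}x_{y'} = x_y - \bar x$, so stationarity holds. Using orthonormality, $\bar x\tp x_y = 1/N$ and $x_{y'}\tp x_y = 0$ for $y'\ne y$, hence $w_y\tp x_y = 1 - \tfrac1N$ and $w_{y'}\tp x_y = -\tfrac1N$, so every margin constraint holds with equality: primal feasibility is met and complementary slackness is automatic. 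Dual feasibility $\alpha\ge0$ is immediate, so $w_y = x_y - \bar x$ is the unique optimum. (As a cross-check one can instead argue the form a priori: projecting any feasible $w_y$ onto $\operatorname{span}\{x_i\}$ cannot increase the objective and leaves the constraints unchanged, and the problem is invariant under permuting labels together with the corresponding orthonormal vectors, so by uniqueness the optimizer must take the shape $w_y = a x_y + b\sum_{i\ne y}x_i$; minimizing over the scalars $a,b$ subject to $a-b\ge1$ recovers $a=\tfrac{N-1}{N}$, $b=-\tfrac1N$.)

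For the ``hence'' statement, write $w_y\tp x = x_y\tp x - \bar x\tp x$; since $\bar x\tp x$ does not depend on $y$, $\argmax_y w_y\tp x = \argmax_y x_y\tp x$. On the other hand $\lVert x - x_y\rVert^2 = \lVert x\rVert^2 - 2\,x_y\tp x + 1$ because $\lVert x_y\rVert = 1$, and $\lVert x\rVert^2 + 1$ is independent of $y$, so $\argmin_y\lVert x - x_y\rVert = \argmax_y x_y\tp x$. Chaining the two identities gives $\argmax_y w_y\tp x = \argmin_y\lVert x - x_y\rVert$.

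I do not anticipate a genuine obstacle; the two things to be careful with are (i) pinning down the exact form of \eqref{eq:multiclass-svm} (whether it carries per-class biases or a regularization constant $C$ --- in the separable hard-margin regime neither changes the conclusion, and the stated solution has no bias), and (ii) getting the index bookkeeping right in the stationarity condition, since each $w_y$ enters both the $N-1$ constraints ``centered at $y$'' (through $w_y\tp x_y$) and the $N-1$ constraints that evaluate it at the other points (through $w_y\tp x_{y'}$).
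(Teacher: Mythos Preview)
Your proposal is correct and follows essentially the same approach as the paper: verify the KKT conditions for the strictly convex program \eqref{eq:multiclass-svm} with the uniform dual multipliers $\alpha_{y,y'}=1/N$, use orthonormality to check that all margin constraints are tight, and deduce the nearest-neighbor equivalence by dropping the $y$-independent $\bar x\tp x$ term. The paper's write-up is more machinery-heavy (it stacks everything into a block matrix $A$ and computes $\mu=(AA\tp)^{-1}\one$ explicitly before reconstructing $w$), whereas you guess both primal and dual upfront and simply verify; your presentation is cleaner, but the underlying argument is identical.
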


\Cref{lemma:orthonormal-maxmarg} says with orthonormal data points, a $N$-class support vector machine (one form of max-margin separators) defined in \cref{eq:multiclass-svm} becomes a nearest-neighbor classifier.
While these assumptions will not exactly hold in practice,
for high-dimensional normalized features,
it is reasonable to expect our selected data points to be \emph{almost} orthonormal.
In combination with  \cref{lemma:iso-lda} (in \cref{app:sec:max-margin}),
this leads to the following optimality result.

\begin{restatable}{corollary}{combined}
\label{prop:max-margin}
    Suppose $Y \sim \mathrm{Uniform}([N])$,
    and $X \mid (Y = y) \sim \mathcal N(\mu_y, \sigma^2 I)$,
    where the $\mu_i$ are orthonormal.
    Then the max-margin separator \eqref{eq:multiclass-svm}
    on $\{ (\mu_i, i) \}_{i=1}^N$
    is Bayes-optimal for $Y \mid (X = x)$.
\end{restatable}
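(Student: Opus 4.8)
The plan is to compose the two structural facts already in hand: \cref{lemma:orthonormal-maxmarg} says that on orthonormal training points the multiclass max-margin rule \eqref{eq:multiclass-svm} is exactly nearest-neighbour classification, while \cref{lemma:iso-lda} says that for a uniform prior over equal-covariance isotropic Gaussians the Bayes-optimal rule is nearest-\emph{mean} classification. Since here the training inputs \emph{are} the class means $\mu_1,\dots,\mu_N$, these two descriptions coincide, and Bayes-optimality follows.

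First I would apply \cref{lemma:orthonormal-maxmarg} with $x_i = \mu_i$ -- legitimate, since the $\mu_i$ are assumed orthonormal -- to the labelled set $\{(\mu_y,y)\}_{y=1}^N$. It yields the explicit solution $w_y = \mu_y - \frac1N\sum_{i=1}^N \mu_i$ and, more importantly, that the induced predictor $x \mapsto \argmax_y w_y\tp x$ is identical to $x \mapsto \argmin_y \lVert x - \mu_y \rVert$. So the max-margin separator trained on $\{(\mu_i,i)\}_{i=1}^N$ is the nearest-mean classifier.

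Next I would write out the Bayes posterior directly (this is the computation behind \cref{lemma:iso-lda}): because $Y$ is uniform on $[N]$ and $X\mid(Y=y)\sim\mathcal N(\mu_y,\sigma^2 I)$, the prior factor $1/N$ and the Gaussian normalising constant are the same for every class, so
\[
    \Pr(Y=y \mid X=x)
    = \frac{\exp\!\big(-\lVert x-\mu_y\rVert^2 / (2\sigma^2)\big)}{\sum_{i=1}^N \exp\!\big(-\lVert x-\mu_i\rVert^2 / (2\sigma^2)\big)}
,\]
which is a strictly decreasing function of $\lVert x-\mu_y\rVert$. Hence $\argmax_y \Pr(Y=y\mid X=x) = \argmin_y \lVert x-\mu_y\rVert$, \ie the Bayes-optimal classifier (for the 0-1 loss) is also the nearest-mean classifier. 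Combining with the previous paragraph, the max-margin separator agrees with a Bayes-optimal classifier.

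The one point I would be careful about -- and the only real obstacle -- is ties. Both $\argmax_y w_y\tp x$ and $\argmin_y \lVert x-\mu_y\rVert$ are multivalued precisely on $\bigcup_{i\neq j}\{x : \lVert x-\mu_i\rVert = \lVert x-\mu_j\rVert\}$, a finite union of hyperplanes; this set has Lebesgue measure zero, hence probability zero under the (absolutely continuous) mixture law of $X$. Thus any measurable tie-breaking rule attains the same 0-1 risk, and ``the two rules agree off a null set'' already suffices to conclude Bayes-optimality. Everything else is a direct substitution into the two cited lemmas, so I would keep the write-up short.
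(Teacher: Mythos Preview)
Your proposal is correct and follows exactly the paper's approach: the paper's proof is the single line ``Combine \cref{lemma:orthonormal-maxmarg,lemma:iso-lda},'' and you do precisely that, even going a bit further by spelling out the posterior computation and handling the measure-zero tie set explicitly.
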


For more general settings, we argue that GMM is still a good method based on being an efficient set cover, as shown in \cref{app:fig:max_margin_aniso} in \cref{app:sec:max-margin}.

\subsubsection{Very-low-budget regime}
Active learning based on Gaussian mixtures is not new in and of itself.
Closely-related methods such as $k$-means, $k$-means$^{++}$ or $k$-medoids have been employed either as standalone selection algorithms~\cite{kmedoids2016aghaee,kmedian2012voevodski} or in combination with uncertainty-based methods~\cite{weigted_logistic2004nguyen,dual2007donmez,badge2019ash,typiclust2022highlow}.
Some recent work~\cite{active_fewshot_pn2017boney}  including DPP~\cite{batch_dpp2019biyik} and Coreset~\cite{coreset2017sener}
show significant improvements over $k$-means baselines.
These trends, however, do not seem to hold true in the very-low-budget scenarios typically encountered in meta-learning.
As shown in \cref{fig:img_cls}, GMM matches or outperforms other low-budget methods with very small numbers of labels for standard image classification tasks, which has not been known in the community.
The following section shows that GMM provides substantial improvements in meta-learning.

\begin{figure*}[t!]
    \centering
    \begin{subfigure}[b]{0.45\textwidth} %
        \includegraphics[width=\textwidth]{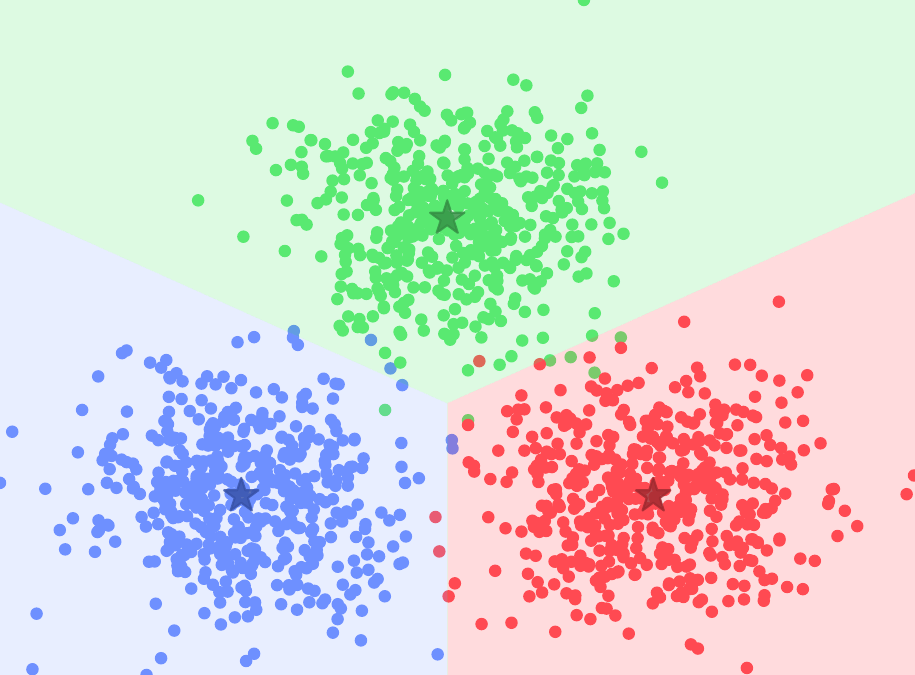}
        \caption{Trained on cluster centers}
        \label{app:fig:max_margin_centers}
    \end{subfigure}
    \hfill
    \begin{subfigure}[b]{0.45\textwidth} %
        \includegraphics[width=\textwidth]{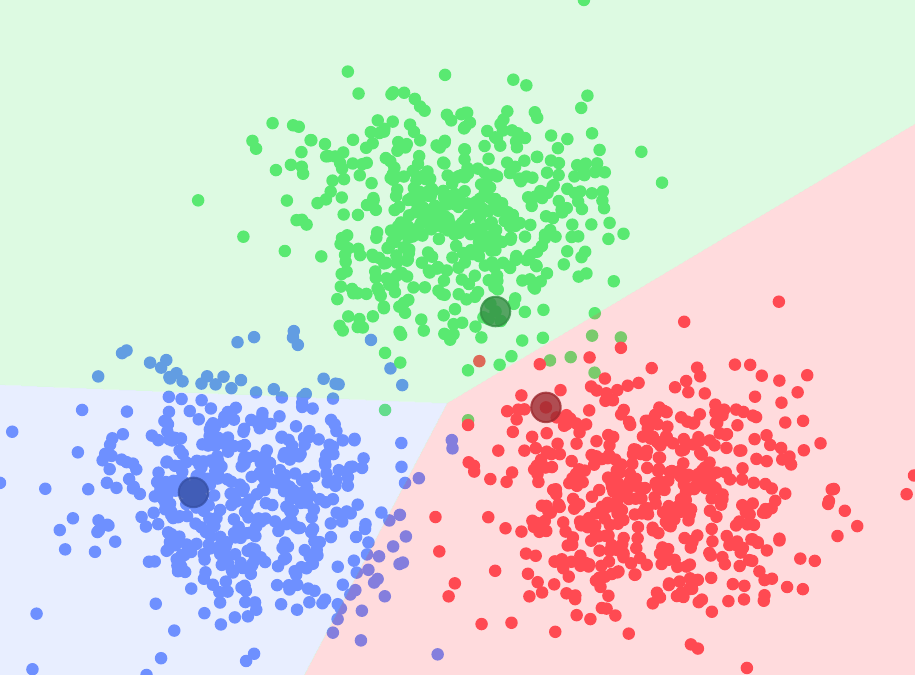}
        \caption{Trained on random points}
        \label{app:fig:max_margin_randoms}
    \end{subfigure}
    \caption{Decision boundaries using a multi-class SVM \eqref{eq:multiclass-svm} trained on a one-shot dataset containing (a) cluster centers (stars) and (b) randomly selected points (circles).}
    \label{fig:max_margin}
    \vspace{3mm}
\end{figure*}

\section{Active Meta-Learning Experiments} \label{sec:experiments}
We now compare various active learning methods for variants of active meta learning as defined in \cref{subsec:active_meta}, both for classification tasks (in \cref{subsec:few_shot_image_cls,subsec:few_shot_cross_domain,subsec:hybrid}) and regression (in \cref{subsec:regression}).

\subsection{Few-shot Image Classification}
\label{subsec:few_shot_image_cls}
We use four popular few-shot image classification benchmark datasets.
\textbf{MiniImageNet}~\cite{matching2016vinyals,metaLSTM2017ravi} consists of $60\,000$ images
with $64$ training classes, $16$ validation, and $20$ test. 
\textbf{TieredImageNet}~\cite{semi_fewshot2018ren} 
consists of $20$ training super-classes, $6$ validation, and $8$ test; each contains $10$ to $30$ sub-classes.
\textbf{FC100}~\cite{tadam2018oreshkin}
consists of $60$ training classes, $20$ validation, and $20$ test. 
\textbf{CUB}~\cite{cub2011wah,MACO2018hilliard} consists of $200$ classes of bird images,
with $140$ training classes, $30$ validation, and $30$ test.
We validate if our active learning method works across various types of meta-learning methods.
We run\footnote{%
    We reproduce %
    ProtoNet, MAML, and ANIL using Learn2Learn~\cite{learn2learn2020Arnold}; for MetaOptNet, Baseline++, and SimpleShot, we use repositories provided by the authors.
}
metric-based:
ProtoNet~\cite{prototypical2017snell}, optimization-based: MAML~\cite{maml2017finn}, ANIL~\cite{anil2019raghu}, and MetaOpt\cite{metaopt2019lee}, as well as pre-training-based: Baseline++~\cite{baseline++2019chen} and SimpleShot~\cite{simpleshot2019wang}.%
\footnote{%
    We do not run a model-based method on this case, though we will in \cref{subsec:regression};
    most variants do not work well conditioning on images.}
We vary the backbone to demonstrate robustness: for instance, we use 4 convolutional blocks for MAML and ProtoNet, and ResNet10~\cite{Resnet2015He} for Baseline++.
As typical in few-shot classification, we report means and 95\% confidence intervals for test accuracy
on 600 meta-test samples.

We use the meta-learner's features as proposed in \cref{subsec:feature-choice} for all methods;
experiments in \cref{app:sec:self_sup} confirm that they outperform contrastive learning of features on the meta-training set.
Additionally, in the main body we only present results where $\Pick_\theta^\train$ is random;
\cref{app:sec:train_time} demonstrates that, in our setup, active learning at train time is actually mildly \emph{harmful} to overall performance, aligning with observations by Ni~\etal\cite{aug_meta2021ni} and Setlur~\etal\cite{support2020setlur}. 

For \textbf{metric-based} methods,
\cref{tbl:proto_fc100} shows results for
ProtoNet %
on FC100.
The simple GMM method significantly outperforms the other active learning methods on all problems considered here.
As reported \cite{typiclust2022highlow,probcover2022yehuda},
uncertainty-based methods are significantly worse than random selection in this low-budget regime.

For \textbf{optimization-based}, \cref{tbl:maml_mini} shows results with MAML %
on MiniImageNet.
GMM again outperforms the other methods in most cases.
The performance of ProbCover is sometimes much lower than other methods due to its radius parameter, which is very difficult to tune, with the best choice changing dramatically depending on the sub-task although \cite{probcover2022yehuda} propose to fix this parameter per dataset (see \cref{app:sec:prob_cover} for more).
Additional results for ANIL %
on TieredImageNet and MetaOptNet %
on FC100 are provided in \cref{app:sec:additional_exp}.

For \textbf{pre-training-based} methods,
we compare active learning strategies with Baseline++ %
on the CUB dataset in \cref{tbl:baseline_cub},
seeing that the proposed method is again usually by far the best,
though in one five-shot case it essentially ties with DPP.
As these methods do not follow the meta-training process in \eqref{eq:meta_train},
train-time stratification is not applicable.
\Cref{app:sec:additional_exp} shows results for SimpleShot. %

\newcolumntype{D}{>{\centering\arraybackslash}p{5.9em}}
\begin{table*}[t!]
\centering
\fontsize{6.75}{9.75}\selectfont
\begin{tabu}{c|D|D|D|D|D|D}
\hline
\multirow{2}{*}{$\Pick_\theta^\eval$} & \multicolumn{3}{c|}{1-Shot} & \multicolumn{3}{c}{5-Shot}  \\
\cline{2-7}
 & Fully strat. & Train strat. & Unstrat. & Fully strat. & Train strat. & Unstrat. \\
\hline
\hline
Random & $ 36.73 \pm 0.18 $ & \third{$31.27 \pm 0.21$} & $31.40 \pm 0.41$ & \third{$47.98 \pm 0.18$} & $42.83 \pm 0.20$ & $44.00 \pm 0.21$ \\
Entropy & $33.67 \pm 0.16$ & $29.82 \pm 0.20$ & $30.01 \pm 0.20$ & $44.64 \pm 0.17$ & $38.39 \pm 0.22$ & $38.36 \pm 0.25$ \\
Margin & $34.28 \pm 0.18$ & $29.74 \pm 0.20$ & $28.99 \pm 0.20$ & $45.31 \pm 0.17$ & $39.65 \pm 0.21$ & $38.13 \pm 0.24$ \\
DPP& $36.20 \pm 0.18$ & $31.34 \pm 0.20$ & $31.09 \pm 0.20$ & $47.53 \pm 0.17$ & \third{$43.69 \pm 0.20$} & \third{$44.19 \pm 0.20$} \\
Coreset & $35.79 \pm 0.17$ & $30.31 \pm 0.20$ & \third{$31.57 \pm 0.18$} & $43.08 \pm 0.40$ & $41.56 \pm 0.20$ & $41.79 \pm 0.22$ \\
Typiclust & \third{$46.01 \pm 0.16$} & $30.96 \pm 0.19$ & $30.61 \pm 0.21$ & $47.54 \pm 0.17$ & $43.61 \pm 0.18$ & $44.03 \pm 0.21$ \\
ProbCover & \second{$48.66 \pm 0.16$} & \second{$32.86 \pm 0.22$} & \second{$33.58 \pm 0.19$} & \second{$51.11 \pm 0.17$} & \second{$44.20 \pm 0.23$} & \second{$44.40 \pm 0.24$} \\
GMM (Ours)& \first{$50.22 \pm 0.18$} & \first{$34.23 \pm 0.23$} & \first{$35.03 \pm 0.23$} & \first{$54.76 \pm 0.17$} & \first{$46.30 \pm 0.21$} & \first{$47.03 \pm 0.20$} \\
\Xhline{2\arrayrulewidth}
\end{tabu}
\vspace{2mm}
\caption{5-Way K-Shot on FC100 with ProtoNet, with $\Pick_\theta^\train$ random. The \first{first}, \second{second}, \third{third} best results for each setting are marked in this and all other tables.}
\label{tbl:proto_fc100}
\vspace{-1mm}
\end{table*}

\begin{table*}[t!]
\centering
\fontsize{6.75}{9.75}\selectfont
\begin{tabu}{c|D|D|D|D|D|D}
\hline
\multirow{2}{*}{$\Pick_\theta^\eval$} & \multicolumn{3}{c|}{1-Shot} & \multicolumn{3}{c}{5-Shot}  \\
\cline{2-7}
 & Fully strat. & Train strat. & Unstrat. & Fully strat. & Train strat. & Unstrat. \\
\hline
\hline
Random & $47.93 \pm 0.20 $ & $28.16 \pm 0.17$  & $34.85 \pm 0.19$  & \third{$64.16 \pm 0.18$} & \third{$53.54 \pm 0.20$} & \second{$58.84 \pm 0.20$} \\
Entropy & $48.16 \pm 0.20$ & $25.56 \pm 0.14$  & $30.44 \pm 0.17$  & $61.22 \pm 0.20$ & $34.36 \pm 0.23$ & $39.57 \pm 0.26$ \\
Margin & $48.31 \pm 0.20$ & $28.32 \pm 0.16$  & $30.83 \pm 0.17$  & $63.73 \pm 0.18$ & $49.24 \pm 0.22$ & $53.92 \pm 0.22$ \\
DPP & $48.96 \pm 0.21$ & \third{$28.90 \pm 0.17$} & \third{$36.44 \pm 0.19$}  & $64.15 \pm 0.18$ & \second{$54.18 \pm 0.20$} & \third{$57.86 \pm 0.19$} \\
Coreset & $47.74 \pm 0.20$ & \second{$29.19 \pm 0.18$} & $33.71 \pm 0.18$ & $61.28 \pm 0.18$ & $30.98 \pm 0.19$ & $45.74 \pm 0.23$ \\
Typiclust & \second{$55.65 \pm 0.18$} & $27.45 \pm 0.17$ & $35.46 \pm 0.18$ & \third{$64.16 \pm 0.18$} & $46.70 \pm 0.21$ & $57.83 \pm 0.21$ \\
ProbCover & \third{$52.07 \pm 0.17$} & $ 23.34 \pm 0.11$ & \second{$37.29 \pm 0.18$} & \second{$64.66 \pm 0.18$} & $40.01 \pm 0.21$ & $ 45.32 \pm 0.22$ \\
GMM (Ours)& \first{$58.82 \pm 0.24$}  & \first{$33.34 \pm 0.24$} & \first{$37.68 \pm 0.19$} & \first{$67.18 \pm 0.18$} & \first{$54.35 \pm 0.20$} & \first{$59.05 \pm 0.20$} \\
\Xhline{2\arrayrulewidth}
\end{tabu}
\vspace{2mm}
\caption{5-Way K-Shot on MiniImageNet with MAML, with $\Pick_\theta^\train$ random.}
\label{tbl:maml_mini}
\end{table*}

\subsubsection{Comparison between active learning methods.}
\cref{fig:al_comp} (left) visualizes context set selection using t-SNE~\cite{tsne08vandermaaten} for one 5-way, 1-shot, unstratified task.
It is vital to select one sample from each class;
only GMM does so here.
\Cref{fig:al_comp} (right) summarizes behavior across many tasks;
while not perfect, GMM does a much better job of selecting distinct classes.

\textbf{Entropy} and \textbf{Margin} are typically far worse than random.
So is \textbf{Coreset}, agreeing with prior observations~\cite{badge2019ash,typiclust2022highlow,probcover2022yehuda}; this may be because of issues with the greedy algorithm and/or sensitivity to outliers.
\textbf{Typiclust} tends to pick points which, while dense according to its ``typicality measure,'' are far from cluster centers; this may be helpful in traditional active learning, but seems to hurt here.
\textbf{DPP} is often better than random, but only barely.

\textbf{ProbCover} manages to cover the feature space well, and is usually second-best. 
However, its ``hard'' radius causes issues; it may be preferable to use a smoother notion, as in GMM. 
The ``purity'' heuristic to choose $\delta$ also does not seem to align well with performance for meta-learning, as shown in \cref{app:sec:prob_cover}.
\Cref{app:sec:further_low_budget} further analyzes the poor performance of other methods.

\textbf{GMM} provides robust performance with few new hyperparameters.\footnote{We did not significantly tune $k$-means or EM parameters from standard defaults.}

``Soft'' $k$-means would be a special case of GMM with a spherical covariance.
For some cases, standard $k$-means performs about the same as GMM, but GMM is occasionally much better: for Baseline++ on CUB, GMM outperforms $k$-means by $3.95$ points for 5-way 1-shot and $11.79$ for 5-shot.
We provide a more thorough comparison to $k$-means in \cref{app:sec:kmeans}.

\begin{table*}[t!]
\centering
\fontsize{8.0}{12.0}\selectfont
\begin{tabu}{c|c|c|c|c}
\hline
\multirow{2}{*}{$\Pick_\theta^\eval$} & \multicolumn{2}{c|}{1-Shot} & \multicolumn{2}{c}{5-Shot}  \\
\cline{2-5}
 & Test strat. & Test unstrat. & Test strat. & Test unstrat. \\
\hline
\hline
Random & $68.44 \pm 0.92$ & $51.03 \pm 0.88$ & $82.66 \pm 0.56$ & \second{$79.57 \pm 0.67$} \\
Entropy & $66.33 \pm 0.91$ & $45.31 \pm 0.89$ & $80.97 \pm 0.60$ & $78.33 \pm 0.72$ \\
Margin& $68.65 \pm 0.90$ & $50.48 \pm 0.94$ & \third{$82.29 \pm 0.64$} & $71.07 \pm 0.83$  \\
DPP & \third{$71.53 \pm 0.89$} & $54.38 \pm 0.92$ & \first{$82.81 \pm 0.55$} & \third{$78.62 \pm 0.76$}\\
Coreset & $69.01 \pm 0.91$ & \second{$56.22 \pm 0.94$} & $82.07 \pm 0.55$ & $76.35 \pm 0.74$ \\
Typiclust & $70.58 \pm 0.81$ & $29.80 \pm 0.32$ & $74.86 \pm 0.81$ & $70.00 \pm 0.92$ \\
ProbCover & \second{$78.11 \pm 0.69$} & \third{$55.09 \pm 0.98$} & $78.59 \pm 0.64$ & $65.71 \pm 0.97$ \\
GMM (Ours)& \first{$79.98 \pm 0.60$} & \first{$59.55 \pm 0.87$} & \second{$82.55 \pm 0.58$} & \first{$82.68 \pm 0.57$} \\
\Xhline{2\arrayrulewidth}
\end{tabu}
\vspace{2mm}
\caption{5-Way K-Shot on CUB with Baseline++, with $\Pick_\theta^\train$ random.}
\vspace{-1mm}
\label{tbl:baseline_cub}
\end{table*}

\begin{figure}[t!]
    \centering
    \includegraphics[width=0.95\textwidth]{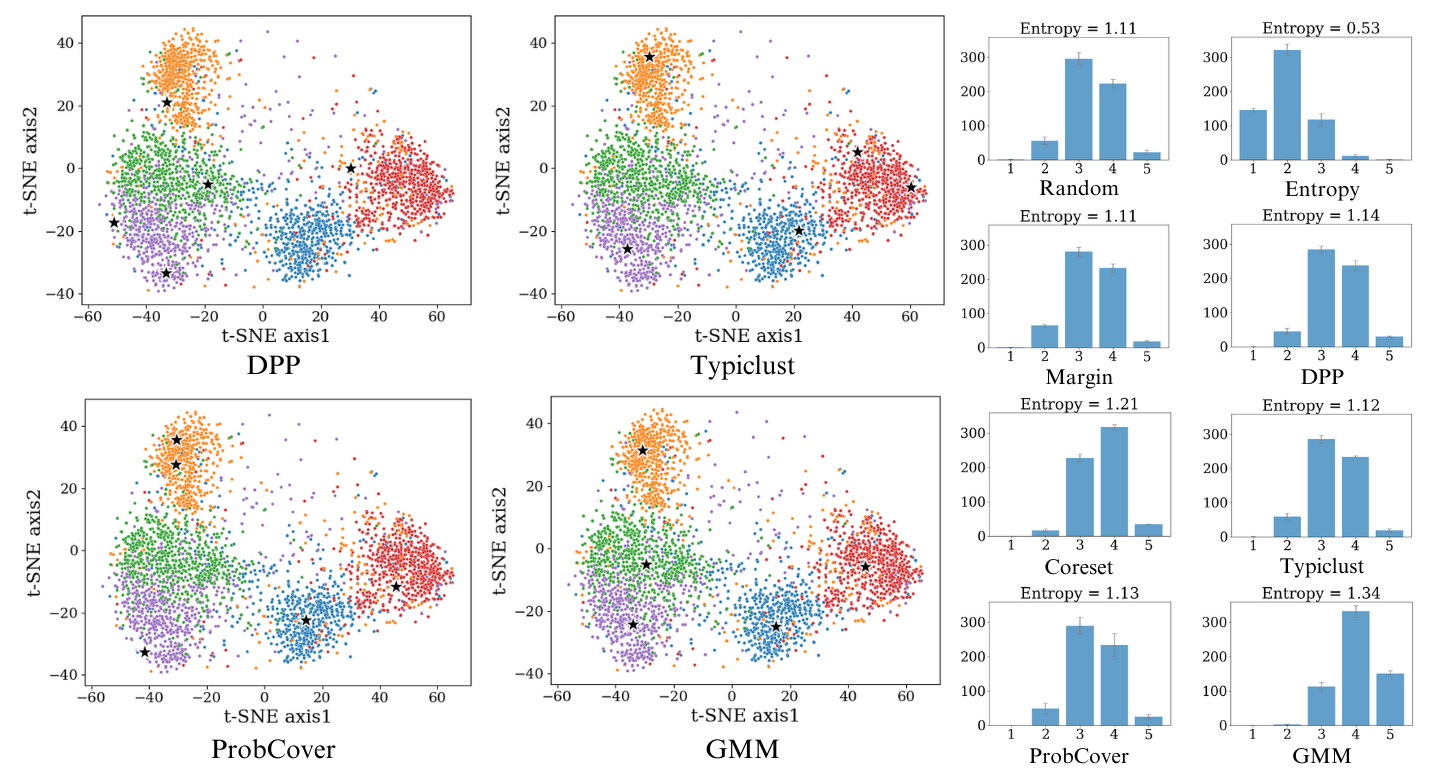}
    \caption{\textbf{Left.} t-SNE of unlabeled points of one 5-way, 1-shot, unstratified MiniImageNet task. Stars denote selected context points using each method. \textbf{Right.} Distributions of the number of classes selected in each $\widetilde\C$ by ProtoNet on MiniImageNet among 600 meta-test cases, along with the mean empirical entropy of $y$ from $\widetilde\C$. 
    The higher the value is, the more diverse classes are selected; $\log 5 \approx 1.6$ would be perfect.
    }
    \label{fig:al_comp}
    \vspace{3mm}
\end{figure}

\begin{table*}[t!]
\begin{minipage}{\textwidth}
\centering
\fontsize{8.5}{13}\selectfont
\begin{tabu}{c|c|c|c|c|c}
\hline
\multirow{2}{*}{Data \& Model} & \multirow{2}{*}{Clustering} & \multicolumn{2}{c|}{1-Shot} & \multicolumn{2}{c}{5-Shot} \\
\cline{3-6}
& &  Train strat. & Unstrat. & Train strat. & Unstrat. \\
\hline
\hline
\multirow{4}{*}{\makecell{MiniImage. \\ MAML}} & 
Weighted Ent. & 22.69 $\pm$ 0.18 & 32.27 $\pm$ 0.32 & 23.75 $\pm$ 0.25 & 46.80 $\pm$ 0.33 \\
& BADGE & 27.71 $\pm$ 0.18 & \second{$34.30 \pm 0.21$} & \second{$41.37 \pm 0.28$} & \second{$58.79 \pm 0.24$} \\ 
& $k$-means Ent. & \second{$30.59 \pm 0.28$} & 33.73 $\pm$ 0.24 & 38.24 $\pm$ 0.29 & 54.87 $\pm$ 0.26 \\
& GMM (Ours) & \first{$33.34 \pm 0.24$} &  \first{$37.68 \pm 0.19$} & \first{$54.35 \pm 0.20$} & \first{$59.05 \pm 0.20$} \\
 \hline
 \multirow{4}{*}{\makecell{FC100 \\ ProtoNet}} &
 Weighted Ent. & \second{$31.80 \pm 0.20$} & 28.94 $\pm$ 0.19 & 40.40 $\pm$ 0.25 & 39.95 $\pm$ 0.25 \\
 & BADGE & 30.91 $\pm$ 0.23 & 29.29 $\pm$ 0.28 & \second{$43.85 \pm 0.22$} & \second{$44.00 \pm 0.29$} \\
 & $k$-means Ent. & 30.93 $\pm$ 0.22 & \second{$30.43 \pm 0.24$} & 41.76 $\pm$ 0.27 & 43.41 $\pm$ 0.29 \\
 & GMM (Ours) & \first{$34.23 \pm 0.23$} & \first{$35.03 \pm 0.23$} & \first{$46.30 \pm 0.21$} & \first{$47.03 \pm 0.20$} \\
\Xhline{2\arrayrulewidth}
\end{tabu}
\end{minipage}%
\vspace{2mm}
\caption{Comparison of GMM with hybrid active learning methods.}
\label{app:tbl:hybrid_vs_gmm}
\end{table*}

\subsection{Comparison with Hybrid Active Learning Methods}
\label{subsec:hybrid}

We compare the proposed GMM method with ``hybrid'' methods that select data points for annotation using both uncertainty and representation measures.\footnote{We separate the comparison with hybrid to highlight it, because hybrid methods are often considered better than solely uncertainty- or representation-based methods.} 
\begin{description}[itemsep=2pt,parsep=2pt,leftmargin=!,labelindent=1em,labelwidth=5em,itemindent=4em,labelsep=1ex,]
    \item[Weighted Entropy] Nguyen~\etal\cite{weigted_logistic2004nguyen} propose weighted expected error for binary classification. For multi-class cases, we derive that it becomes weighted entropy, where weights are likelihood computed using soft $k$-means.
    \item[BADGE] This method \cite{badge2019ash} selects points using $k$-means$^{++}$ with embeddings derived from the gradients of loss w.r.t the weights of the last layer.
    \item[$k$-means Entropy] This approach \cite{hybrid_active_meta2021al} first clusters unlabeled samples using $k$-means$^{++}$, then selects samples per cluster using the classifier's entropy.
\end{description}
\cref{app:tbl:hybrid_vs_gmm} shows that the proposed GMM-based method significantly outperforms all the hybrid methods.
This experiment, along with the poor performance of uncertainty methods such as Entropy, demonstrates that for 
the very-low-budget regime, diversity is significantly more important than reducing uncertainty. 

\subsection{Cross-Domain Active Meta-Learning}
\label{subsec:few_shot_cross_domain}
Cross-domain learning, where $\mathcal P^\train$ is ``fundamentally different'' from $\mathcal P^\eval$, is typically more difficult than ``in-domain'' meta-learning.
We use a ResNet18~\cite{Resnet2015He} pretrained with standard supervised learning on ImageNet,
and meta-test on CUB and \textbf{Places}~\cite{places2017zhou},
which contains images of ``places'' such as restaurants.
As used for cross-domain meta-learning by \cite{cross_domain2022oh}, it contains $16$ classes with an average of $1{,}715$ images each.
As the model is not meta-trained,
train stratification is not relevant;
we show results in \cref{tbl:cross_domain} only for unstratified test sets.
GMM is again the clear overall winner; other methods are often worse than random.

\begin{table*}[t!]
\centering
\fontsize{8.0}{12.0}\selectfont
\begin{tabu}{c|c|c|c|c}
\hline
\multirow{3}{*}{$\Pick_\theta^\eval$} & \multicolumn{2}{c|}{$P^\eval$ on Places} & \multicolumn{2}{c}{$P^\eval$ on CUB} \\
\cline{2-5}
& 1-Shot & 5-Shot  &  1-Shot & 5-Shot  \\
 \cline{2-5}
\hline
\hline
Random & $44.28 \pm 1.93$ & $77.92 \pm 1.70$ & $49.93 \pm 0.92$ & \second{$84.38 \pm 0.72$}  \\
Entropy & $36.12 \pm 1.25$ & $57.79 \pm 2.93$ & $41.85 \pm 0.99$ & $71.15 \pm 0.99$ \\
Margin & $43.31 \pm 1.97$ & $73.65 \pm 1.94$ & $48.04 \pm 0.98$ & $78.84 \pm 0.92$  \\
DPP & $46.76 \pm 2.29$  & \second{$78.36 \pm 1.89$} & \third{$51.41 \pm 0.90$} & \third{$84.19 \pm 0.72$}  \\
Coreset & \second{$50.03 \pm 0.93$} & $65.20 \pm 2.77$ & $50.77 \pm 0.95$ & $81.80 \pm 0.81$ \\
Typiclust & $43.76 \pm 1.98$ & \third{$77.57 \pm 1.84$}  & $43.39 \pm 1.03$ & $50.69 \pm 1.08$ \\
ProbCover & \third{$47.93 \pm 1.08$} & $59.08 \pm 2.50$ & \first{$62.13 \pm 1.08$} & $69.80 \pm 1.16$ \\ %
GMM (Ours)& \first{$60.01 \pm 0.86$} & \first{$86.45 \pm 1.42$} & \second{$59.87 \pm 0.86$} & \first{$85.49 \pm 0.67$} \\
\Xhline{2\arrayrulewidth}
\end{tabu}
\vspace{2mm}
\caption{Cross-domain meta-learning tasks using a ResNet18 pre-trained on ImageNet.}
\label{tbl:cross_domain}
\end{table*}

\subsubsection{Discussion}
Most uncertainty measures tend to be high near decision boundaries. This may be sub-optimal in low-budget settings, as these uncertain points often represent outliers, or are too challenging to generalize.

The primary purpose of context sets in meta-learning is to inform predictions on target samples, necessitating the selection of easily referable points. 
If the selected context samples are too distant from the target samples, making accurate predictions for the target set becomes difficult. 
Diversity measures, particularly GMM, ensure that the context set remains close to the target set even in adverse scenarios, such as when target samples are outliers (see \cref{fig:max_margin}).
Thus, it is preferable to solely consider diversity for active selection of context sets. 

While hybrid methods that incorporate both uncertainty and diversity may be beneficial in mid or high-budget active learning scenarios, they provide limited assistance in extremely low-budget scenarios such as meta-learning.

\subsection{Active Meta-Learning for Regression}
\label{subsec:regression}

Each \textbf{sinusoidal function}~\cite{maml2017finn} has task $y = a \sin(x + p)$, where $a \sim \text{Unif}(0.1, 5)$ is the amplitude, and $p \sim \text{Unif}(0, \pi)$ is the phase of sine functions; we use MAML for this dataset. 
\textbf{Distractor} and \textbf{ShapeNet1D} are vision regression datasets \cite{meta_regression2022gao};
the task is to predict the position of a specific object in an image ignoring a distractor,
or to predict an object's 1D pose (azimuth rotation).
\textbf{IC} uses objects whose classes were observed during meta-training, while \textbf{CC} has novel object classes. %
We use conditional Neural Processes (NP) for Distractor, and attentive NP for ShapeNet1D.
Details are provided in \cref{app:sec:additional_exp_reg}.

\Cref{tbl:regression} compares active strategies on these datasets; GMM again performs generally the best, followed by Coreset and DPP instead of ProbCover.

\begin{table*}[t!]
\begin{minipage}{\textwidth}
\centering
\fontsize{8.0}{11.0}\selectfont
\begin{tabu}{c|c|c|c|c|c}
\hline
\multirow{2}{*}{Active Strategy} & \multirow{2}{*}{Sine (3-Shots)} & \multicolumn{2}{c|}{Distractor (2-Shots)} & \multicolumn{2}{c}{ShapeNet1D (2-Shots)} \\
\cline{3-6}
  & & IC & CC  & IC & CC \\
\hline
\hline
Random & $24.17 \pm 0.43$ & $18.91 \pm 2.13$ & $ 25.79 \pm 2.17 $  & $16.52 \pm 1.08$ & $19.07
\pm 1.30$  \\  
DPP & \third{$23.19 \pm 0.51$} & \second{$18.08 \pm 2.12$} & \first{$19.68 \pm 1.92$} & \third{$11.83 \pm 0.85$} & \third{$13.68 \pm 0.93$}\\
Coreset & $31.36 \pm 0.48$ & \third{$19.58 \pm 1.95$} & \third{$24.08 \pm 2.19$} & \second{$11.39 \pm 0.91$} & \second{$13.05 \pm 1.18$}\\
Typiclust & \second{$21.59 \pm 0.40$} & $20.27 \pm 2.15$ & $ 24.96 \pm 2.68$ & $12.54 \pm 1.08$ & $14.58 \pm 1.24$ \\
ProbCover & $29.36 \pm 0.49$ & $21.96 \pm 2.45$ & $25.25 \pm 2.78$ & $12.31 \pm 0.85$ & $13.95 \pm 1.08$ \\
GMM (Ours)& \first{$18.09 \pm 0.38$} & \first{$17.95 \pm 2.05$} & \second{$22.03 \pm 2.42$} & \first{$10.78 \pm 0.72$} & \first{$12.35 \pm 0.97$} \\
\Xhline{2\arrayrulewidth}
\end{tabu}
\end{minipage}%
\vspace{2mm}
\caption{Meta-learning for regression on a toy dataset and two pose estimation datasets for Intra-Category (IC) and Cross-Category (CC). Sine func.\ and {Distractor} use mean squared error, {ShapeNet1D} uses cosine-sine-distance; lower values are better for each.}
\label{tbl:regression}
\end{table*}

\section{Conclusion}
\label{sec:conclusion}

We clarified the ways in which active learning can be incorporated into meta-learning.
While active context set selection does not seem to work at meta-train time (\cref{app:sec:train_time}), it can be extremely useful at meta-testing/deployment time.

We proposed a surprisingly simple method that substantially outperforms previous proposals.
It is intuitive, very easy to implement,
and bears theoretical guarantees in a particular ``stylized'' but informative situation.

\section*{Acknowledgements}
This work was enabled in part by support provided by the
Natural Sciences and Engineering Research Council of Canada,
the Canada CIFAR AI Chairs program,
Mitacs through the Mitacs Accelerate program,
Calcul Québec,
the BC DRI Group,
and the Digital Research Alliance of Canada.

\bibliographystyle{splncs04}
\bibliography{main}

\clearpage
\appendix

\section{Details for Max-Margin Motivation}
\label{app:sec:max-margin}
The following optimization problem is one form of an $N$-class max-margin problem, i.e.\ a multi-class support vector machine \cite{multiclass-svm}, on a training set $\{ (x_i, y_i) \}_{i=1}^m$:
\begin{equation} \label{eq:multiclass-svm}
    \min_{w_1, \dots, w_N} \sum_{y=1}^N \lVert w_y \rVert^2
    \;\text{ s.t. } \;
    \forall i \in [m], \; 
    \forall y' \ne y_i, \;
    w_{y_i}\tp x_i \ge w_{y'}\tp x_i + 1
.\end{equation}
This is a ``hard'' version of the problem used as a classification head by MetaOptNet \cite{metaopt2019lee},
and can be obtained in their framework by taking the penalty parameter $C \to \infty$.

The decision boundaries obtained by small-step-size gradient descent for linear predictors with cross-entropy loss on separable data converge to those obtained by \eqref{eq:multiclass-svm},
as shown by Theorem 7 in Soudry~\etal\cite{soudry:implicit}, for almost all datasets.
Thus, ANIL \cite{anil2019raghu}, which uses gradient descent for linear predictors with cross-entropy loss on separable data,
will approximately obtain the same solution when using enough steps with appropriately small learning rates.

MetaOptNet uses the homogeneous predictors discussed here.
We can handle non-homogeneous linear predictors ($w\tp x + b$ instead of just $w\tp x$)
with the standard trick of adding a constant $1$ feature to each data point.
This solution actually does not quite maximize the margin on the original problem, since it effectively adds $b^2$ to the objective in \eqref{eq:multiclass-svm},
but ANIL will find exactly this same solution when using gradient descent on a function with a separate intercept.

As visualized in \cref{fig:max_margin} and explained in \cref{subsec:gmm}, if the class-conditional data distributions are isotropic Gaussians with the same covariance matrices, it is more advantageous to label the cluster centers than a random point from each cluster (supported by \cref{prop:max-margin}).
We provide the proof for \cref{prop:max-margin} below.

\combined*
\begin{proof}
    Combine \cref{lemma:orthonormal-maxmarg,lemma:iso-lda} below.
\end{proof}

The orthonormal assumption keeps the proof tractable;
far more analysis would be needed without it.
With high-dimensional meta-learned features that are well-aligned to the learning problem, however, it is reasonable to expect that inner products between different classes will be much smaller than the within-class inner products.

\begin{figure*}[ht]
    \centering
    \begin{subfigure}[b]{0.45\textwidth} %
        \includegraphics[width=\textwidth]{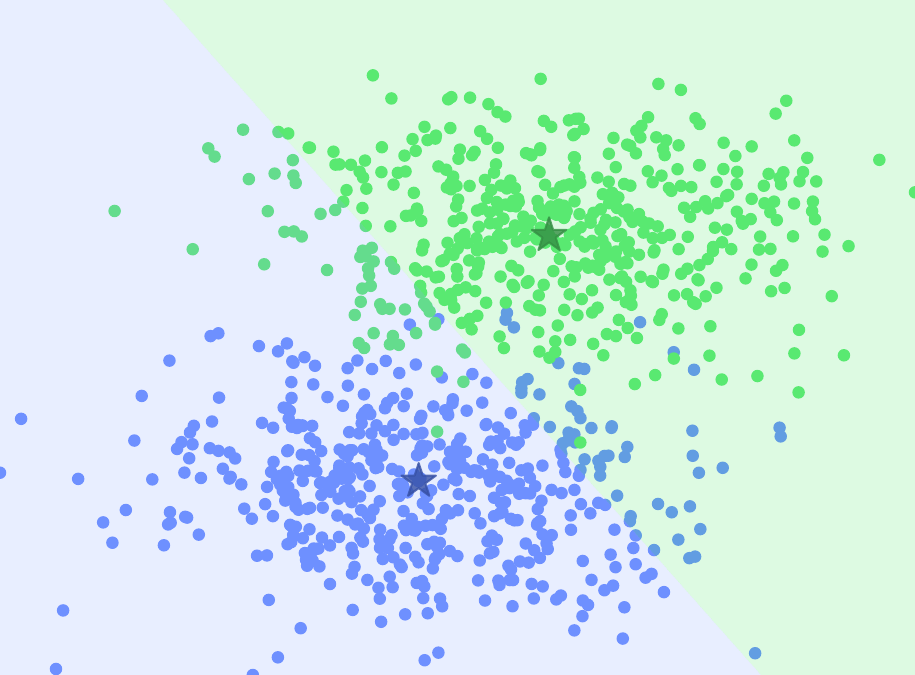}
        \caption{Trained on $N$ cluster centers}
        \label{app:fig:max_margin_aniso:one}
    \end{subfigure}
    \hfill
    \begin{subfigure}[b]{0.45\textwidth} %
        \includegraphics[width=\textwidth]{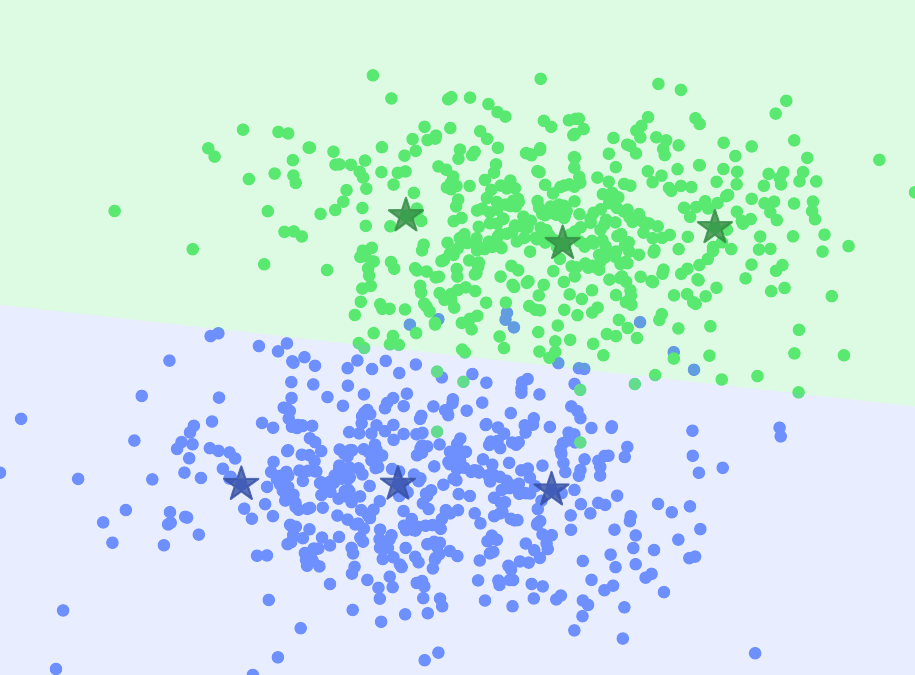}
        \caption{Trained on $3N$ cluster centers}
        \label{app:fig:max_margin_aniso:three}
    \end{subfigure}
    \caption{Decision boundaries using a multiclass SVM \eqref{eq:multiclass-svm} trained on cluster centers (shown by stars), with (a) the one-shot case and (b) the three-shot case.}
    \label{app:fig:max_margin_aniso}
\end{figure*}

This optimality result can break when the clusters do not share a spherical covariance;
consider \cref{app:fig:max_margin_aniso:one},
where the data is still Gaussian
but the shared class-conditional covariance is not spherical.
In the one-shot case, max-margin on the separators
does not choose the optimal separator.
In this case, we could manually select points to choose the correct line.
Doing so, however, is quite risky;
since we do not know the data labels (or that it is actually Gaussian),
we might incorrectly separate the data.
\Cref{app:fig:max_margin_aniso:three} shows
the same problem in a three-shot setting;
here, even though the data is truly generated from a mixture of two Gaussians,
fitting a mixture of six Gaussians gives us an approximate set cover of the data,
and the max-margin separator now works well.

In fact, we can expect that 
(a) as the number of clusters grows,
the cluster centers produce a better and better set cover of the dataset;
(b) the max-margin separator on a set cover will approximate the max-margin separator on the full dataset,
since the support vectors are all nearby.
\danica{would be nice to prove this\dots sigh}

\subsection{Proofs}

\maxmargin*

\begin{proof}
    We will be able to analytically solve the KKT conditions for \eqref{eq:multiclass-svm} in this case.
    Rather than using existing analyses of \eqref{eq:multiclass-svm},
    it will be simpler to directly analyze this particular case.

    Let $\w = \begin{bmatrix} w_1 \\ \vdots \\ w_N \end{bmatrix} \in \R^{N d}$,
    where $d$ is the dimension of the $x_i$ and $w_y$.
    The objective of our optimization problem is then simply $\lVert \w \rVert^2$.

    We will next define a matrix $A$ such that the constraints can be written as $A \w + \one \le \zero$,
    with
    $A \in \R^{N (N-1) \times N d}$ and
    $\le$ interpreted elementwise.
    Each constraint is of the form $- w_i\tp x_i + w_j\tp x_i + 1 \le 0$,
    where $i \ne j$ are class indices in $[N]$.
    We can write the corresponding row of $A$ as
    $(E_j - E_i) x_i$,
    where $E_i \in \R^{N d \times d}$ are given by 
    $E_i = \begin{bmatrix} 0_{(i-1) d \times d} \\ I_d \\ 0_{(N-i-1) d \times d} \end{bmatrix}$;
    these $E_i$ are a block-matrix analogue of standard basis vectors,
    so that $E_i x_i \in \R^{N d}$ has $x_i$ in the $i$th block of $d$ coordinates, and $0$ elsewhere.
    We will order these constraints in $A$ in ``row-major'' order:
    recalling that $i \ne j$,
    this means we have first $i=1 \; j=2$, then $i=1 \; j=3$, up to $i=1 \; j=N$,
    followed by $i=2 \; j=1$, $i=2 \; j=3$, and so on.
    Let $\ell(i, j)$ give the index of the corresponding constraint, so that e.g.\ $\ell(1, 3) = 2$.

    Now, the problem can be written
    \[
        \min_{\w \in \R^{N d}} \frac12 \lVert \w \rVert^2
        \;\text{s.t.}\; A \w + \one \le \zero
    ,\]
    with the $\frac12$ introduced for convenience.
    The KKT conditions for this problem are
    \[
        \w + A\tp \mu = \zero
        \quad
        A \w + \one \le \zero
        \quad
        \mu \ge \zero
        \quad 
        \mu \odot (A \w + \one) = \zero
    ,\]
    where $\odot$ is elementwise multiplication.
    From the first condition, $\w = -A\tp \mu$, where $\mu \in \R^{N (N-1)}$ is any vector satisfying
    \[
        \mu \ge \zero
        \quad
        A A\tp \mu - \one \ge \zero
        \quad
        \mu \odot (A A\tp \mu - \one) = \zero
    .\]
    Since \eqref{eq:multiclass-svm} is a strictly convex minimization problem with affine constraints,
    these conditions are necessary and sufficient for optimality,
    and the solution $\w$ is unique.

    We can reasonably expect, since the $x_i$ are orthonormal, that all constraints should be active,
    meaning that $A A\tp \mu = \one$.
    Indeed, choosing $\mu = (A A\tp)^{-1} \one$ automatically satisfies the second and third conditions;
    it only remains to show that this $\mu \ge \zero$
    in order to show this as an optimal solution to \eqref{eq:multiclass-svm}.

    To do this, we will explicitly characterize $A A\tp$:
    \[
        (A A\tp)_{\ell(i, j), \ell(i', j')}
        = x_i\tp (E_j - E_i)\tp (E_{j'} - E_{i'}) x_{i'}
        = (\delta_{i i'} + \delta_{j j'} - \delta_{i j'} - \delta_{j i'}) \, x_i\tp x_{i'}
    ,\]
    where $\delta_{ij} = \indic(i = j)$ is the Kronecker delta,
    since $E_i\tp E_j = \delta_{ij} I_d$.
    
    Since the $x_i$ are orthonormal,
    $x_i\tp x_{i'} = \delta_{i i'}$.
    As we know $i \ne j$ and $i' \ne j'$,
    this simplifies to
    \[
        (A A\tp)_{\ell(i,j), \ell(i',j')}
        = \delta_{i i'} (1 + \delta_{j j'})
    .\]
    Thus $(A A\tp)$ is a block matrix
    with diagonal blocks of size $(N-1) \times (N-1)$
    with values $I_{N-1} + \one_{N-1} \one_{N-1}\tp$,
    and all off-diagonal blocks zero.
    Taking $\mu = (A A\tp)^{-1} \one_{N (N-1)}$,
    the zero blocks contribute nothing,
    so each block of $N-1$ entries of $\mu$
    is $(I_{N-1} + \one_{N-1} \one_{N-1})^{-1} \one_{N-1}$.
    
    Note that $\one_{N-1} \one_{N-1}\tp$ has one eigenvector $v_1 = \frac{1}{\sqrt{N-1}} \one$ with eigenvalue $\lambda_1 = N - 1$,
    and the remaining eigenvalues are all zero with eigenvectors satisfying $v_i\tp \one = 0$.
    Adding $I$ to this matrix simply increases all eigenvalues by one.
    Thus,
    \begin{align}
      \left( I + \one \one\tp \right)^{-1} \one
    = &\frac{1}{N}
      \left(\frac{1}{\sqrt{N - 1}} \one \right)
      \left(\frac{1}{\sqrt{N - 1}} \one \right)\tp
      \one \\
    &+ \sum_{i = 2}^{N-1} v_i \underbrace{v_i\tp \one}_{0}
    = \frac1N \underbrace{\frac{\one \tp \one}{N - 1}}_1 \one
    = \frac1N \one
    ,
    \end{align}
    and so $\mu = \frac1N \one_{N (N-1)}$,
    which is indeed $\ge \zero$;
    thus this is an optimal solution to the problem.

    We next reconstruct $\w = - A\tp \mu = - \frac1N A\tp \one_{N (N-1)}$.
    Consider the block $w_i$ inside $\w$;
    its value will be the negative mean of the entries of $A$ with an $E_i$ in them.
    The $\ell(i, j)$ rows for $j \ne i$ contribute $N - 1$ entries of the form $-E_i x_i$.
    We also have the $\ell(k, i)$ rows,
    which have one $E_i x_k$ term for each $k \ne i$.
    Thus
    \[
        w_i
        = -\frac1N \left( - (N - 1) x_i + \sum_{k \ne i} x_k \right)
        = - \frac1N \left( - N x_i + \sum_{k=1}^N x_k \right)
        = x_i - \bar{x}
    ,\]
    where $\bar x = \frac1N \sum_{k=1}^N x_k$.
    Thus, for a test point $x$,
    \[
        \argmax_i w_i\tp x
        = \argmax_i x_i\tp x - \bar x\tp x
        = \argmax_i x_i\tp x
    .\]
    Because the $x_i$ are orthonormal,
    this is further equal to
    \[
        \argmin_i \lVert x_i \rVert^2 + \lVert x \rVert^2 - 2 x_i\tp x
        = \argmin_i \lVert x - x_i \rVert
    .\qedhere\]
\end{proof}

\begin{lemma} \label{lemma:iso-lda}
    If $X \mid Y = y \sim \mathcal N(\mu_y, \sigma^2 I)$
    and $Y \sim \operatorname{Uniform}([N])$,
    the Bayes-optimal classifier is given by
    \[
        f^*(x) = \argmin_{y} \lVert x - \mu_y \rVert
    .\]
\end{lemma}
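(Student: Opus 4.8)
The plan is to recognize this as the textbook computation of the Bayes classifier under 0--1 loss for a Gaussian mixture with isotropic, equal-covariance components. First I would recall that the classifier minimizing the expected 0--1 loss is the maximum-a-posteriori rule $f^*(x) = \argmax_{y \in [N]} \Pr(Y = y \mid X = x)$, with ties broken arbitrarily; the set of $x$ at which two or more posteriors coincide is a finite union of hyperplanes (see below), hence has Lebesgue measure zero and therefore probability zero under the absolutely continuous law of $X$, so the tie-breaking does not affect optimality.

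Next I would apply Bayes' rule, $\Pr(Y = y \mid X = x) = p(x \mid Y = y)\,\Pr(Y = y)\,/\,p(x)$. Since $Y \sim \operatorname{Uniform}([N])$, the factor $\Pr(Y = y) = 1/N$ does not depend on $y$, and $p(x)$ does not depend on $y$ either, so $f^*(x) = \argmax_y p(x \mid Y = y)$. Substituting the isotropic Gaussian density $p(x \mid Y = y) = (2\pi\sigma^2)^{-d/2}\exp\!\big(-\lVert x-\mu_y\rVert^2/(2\sigma^2)\big)$, the prefactor $(2\pi\sigma^2)^{-d/2}$ is independent of $y$, and $t \mapsto \exp(-t/(2\sigma^2))$ is strictly decreasing for $\sigma^2 > 0$; hence maximizing the density over $y$ is equivalent to minimizing $\lVert x - \mu_y\rVert^2$, and therefore to minimizing $\lVert x - \mu_y\rVert$. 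This yields $f^*(x) = \argmin_y \lVert x - \mu_y\rVert$, as claimed. The hyperplane remark used above is then immediate: $\lVert x - \mu_i\rVert = \lVert x - \mu_j\rVert$ iff $2(\mu_j - \mu_i)\tp x = \lVert \mu_j\rVert^2 - \lVert \mu_i\rVert^2$, an affine condition, so for $\mu_i \ne \mu_j$ the tie set is a hyperplane.

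There is essentially no real obstacle here; the only points needing a word of care are (i) pinning down that ``Bayes-optimal'' refers to minimizing the expected 0--1 loss, so that the MAP rule is the optimal predictor, and (ii) the measure-zero handling of ties just described. If desired I would close by noting that this decision rule is exactly the one realized by the max-margin separator in \cref{lemma:orthonormal-maxmarg} when the $\mu_y$ are orthonormal, which is precisely how \cref{prop:max-margin} follows by combining the two lemmas.
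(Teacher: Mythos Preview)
Your proposal is correct and follows essentially the same approach as the paper: apply Bayes' rule, cancel the uniform prior and the $y$-independent normalizing constant, then use monotonicity of the exponential to reduce to the nearest-mean rule. The paper's proof is just a terser version of what you wrote, omitting the tie-breaking discussion and the explicit invocation of 0--1 loss.
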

\begin{proof}
    This well-known fact follows by combining
    \[
        p(Y = y \mid X = x)
        = \frac{p(X = x \mid Y = y) p(Y = y)}{p(X = x)}
        \propto p(X = x \mid Y = y)
    \]
    with the definition of the density for $X$,
    \[
    \argmax_y \frac{1}{(2 \pi \sigma^2)^{d/2}} \exp\left( - \frac{1}{2 \sigma^2} \lVert x - \mu_y \rVert^2 \right)
    = \argmin_y \lVert x - \mu_y \rVert
    .\qedhere \]
\end{proof}

\section{Implementation Details for Meta Learning Algorithms}
\label{app:sec:ml}

\subsubsection{Metric-based} We use a meta learning library called learn2learn~\cite{learn2learn2020Arnold} to implement \textbf{ProtoNet}~\cite{prototypical2017snell}. 
Following the original paper, we train a model with $30$-way and $20$-way for 1-Shot and 5-Shot, respectively, for $3{,}000$ iterations.
We use a 4 layer convolutional neural network (Conv4) with $64$ channel size, and the batch size is set to $100$.
For optimization, we employ an Adam optimizer with a learning rate of $0.01$ without having a learning rate schedule.

\subsubsection{Optimization-based}
We use learn2learn library to implement both \textbf{MAML} \cite{maml2017finn} and \textbf{ANIL}~\cite{anil2019raghu}.
We use Conv4 with $32$ channel size for MAML and $64$ channel size for ANIL (larger channel size does not perform better for MAML). 
We train both MAML and ANIL for $60{,}000$ iterations. 
For optimizer, we employ an Adam optimizer for both with learning rates of $0.003$ and $0.001$ (adaptation learning rates of $0.5$ and $0.1$) for MAML and ANIL, respectively.
Batch sizes are set to $32$ for both.

For \textbf{MetaOptNet}~\cite{metaopt2019lee}, we use the publicly available code provided by the authors of the paper (\url{https://github.com/kjunelee/MetaOptNet}). 
We employ the dual formulation of Support Vector Machine (SVM) proposed in MetaOptNet (MetaOptNet-SVM) for experiments with the training shot of $15$, and use the default hyperparameter settings.
For instance, we use a SGD optimizer with initial learning rate of $0.1$ which decays step-wise. 
We train a model for $60$ epochs with a batch size of $8$.

\subsubsection{Model-based}
For both Conditional Neural Process (CNP)~\cite{cnp2018garnelo2018} and Attentive Neural Process (ANP)~\cite{attnnp2019kim}, we use the publicly available code provided by the authors of the paper that addresses regression tasks for computer vision problems~\cite{meta_regression2022gao} (\url{https://github.com/boschresearch/what-matters-for-meta-learning}).

As the authors provide the model checkpoints for CNP on Distractor dataset and ANP on ShapeNet1D, we utilize them to compare active learning methods in meta-test time.
We use 2-Shot for context sets in meta-test time instead of 25-Shot as done in the original work, since 25-Shot is too large to investigate the difference between active learning methods.

\subsubsection{Pre-training-based}

We use the publicly available code provided by the authors of the papers for both \textbf{Baseline++}~\cite{baseline++2019chen} (\url{https://github.com/wyharveychen/CloserLookFewShot}) and \textbf{SimpleShot}~\cite{simpleshot2019wang} (\url{https://github.com/mileyan/simple_shot}).
For both models, we use the features from the pre-trained models on the whole training dataset in inference time.
As reported in the public repository for Baseline++, the performance on CUB for 1-Shot and 5-Shot is lower than the numbers reported in the paper by about $1.1\%$ and $2.5\%$, respectively.
Similarly, the reproduced performance of SimpleShot for 1-Shot and 5-Shot is lower by about $4\sim 5\%$.
Note that the numbers correspond for the case of fully stratified random sampling.

\section{Relationship with Semi-Supervised Few-Shot learning}
\label{app:sec:semi}
In this section, we describe the relationship between active meta-learning and semi-supervised few-shot learning~\cite{embarrassingly2022wei,semi_fewshot2018ren}.

Both semi-supervised few-shot learning and active meta-learning aim to reduce the cost of manual data annotation, but they approach this goal differently. Semi-supervised few-shot learning leverages unlabeled data points without additional annotation, while active meta-learning iteratively adds new labeled data points selected from an unlabeled pool.

Among semi-supervised few-shot learning approaches, pseudo-labeling~\cite{pseudo2021huang} is particularly closely related to active learning. Both pseudo-labeling and active learning utilize unlabeled data, but their methodologies differ. Active learning uses uncertainty or diversity to select data for oracle labeling, targeting points whose labels are unknown. In contrast, pseudo-labeling uses a trained model to predict data labels, which can introduce errors if predictions are incorrect. Thus, pseudo-labeling focuses on data points where the model is already confident—precisely the points active learning would not select.

Combining these contrasting methods could be beneficial and interesting. Pseudo-labeling requires a well-trained classifier, which active learning can support by providing a robust labeled dataset.

\section{Implementation Details for Active Learning Strategies}
\label{app:sec:al}
In this section, we provide detailed description for the implementation of the following active learning methods.
\begin{description}[itemsep=2pt,parsep=2pt,leftmargin=!,labelindent=0.75em,labelwidth=7.5em,itemindent=8.5em,labelsep=0.25em,]
    \item[DPP~\cite{batch_dpp2019biyik}] We use DPPy library~\cite{dppy19GPBV} to implement DPP selection. Gram matrix of the features from the penultimate layer are used as L-ensembles for DPP. We employ $k$-DPP to select $k$ number of context data points.
    \item[Coreset~\cite{coreset2017sener}] We refer to both original code and code provided by the authors of Typiclust and ProbCover. Since we assume that there is no initial labeled data points, we randomly choose the first data point and then apply the greedy algorithm after that.
    \item[Typiclust~\cite{typiclust2022highlow}] We refer to the publicly available code provided by the authors of the paper (\url{https://github.com/avihu111/TypiClust}). As the maximum number of data points to annotate is $25$ ($= 5$-Way $\times$ $5$-Shot), we do not set the maximum number of clusters unlike the original paper. We set the $k$ in $k$-NN to $20$ as with the original work.
    \item[ProbCover~\cite{probcover2022yehuda}] We use the code provided by the original authors of the paper (it is the same as Typiclust). As we state  in \cref{app:sec:prob_cover} and \cref{app:sec:self_sup}, we exploit the features from the meta learners instead of self-supervised features to determine the radius parameters of ProbCover. In particular, the radius for each algorithm and dataset combination is determined as shown in \cref{app:sec:prob_cover}.
    \item[GMM (Ours)] We refer to a publicly available implementation for GMM (\url{https://github.com/ldeecke/gmm-torch}). As previously mentioned, we initialize the cluster centers using $k$-means. Then, we update the cluster means and covariance matrix (shared by all the clusters) using expectation maximization algorithm for up to $100$ iterations. We make the covariance matrix shared between the clusters because we assume that the ``influence" of each annotated data point to other data points is roughly the same regardless of data point although the weight of each dimension may be different (if they are the same, it is equivalent to $k$-means).
\end{description}

\section{Comparison of quality of selected data points}
\label{app:sec:goodness}
In this section, we estimate the quality of selected data points from the low budget active learning methods.
In \cref{app:fig:al_quality}, we compare them in the distance and accuracy as explained in the caption with ANIL~\cite{anil2019raghu} on MiniImageNet.
Whether a task is 1-Shot or 5-Shot, or train-time stratified or unstratified, we can observe that the metrics for GMM are consistently the best.

\begin{figure*}[t!]
    \centering
    \captionsetup{font=small}
    \begin{subfigure}[b]{0.48\textwidth} %
        \captionsetup{font=small}
        \includegraphics[width=\textwidth]{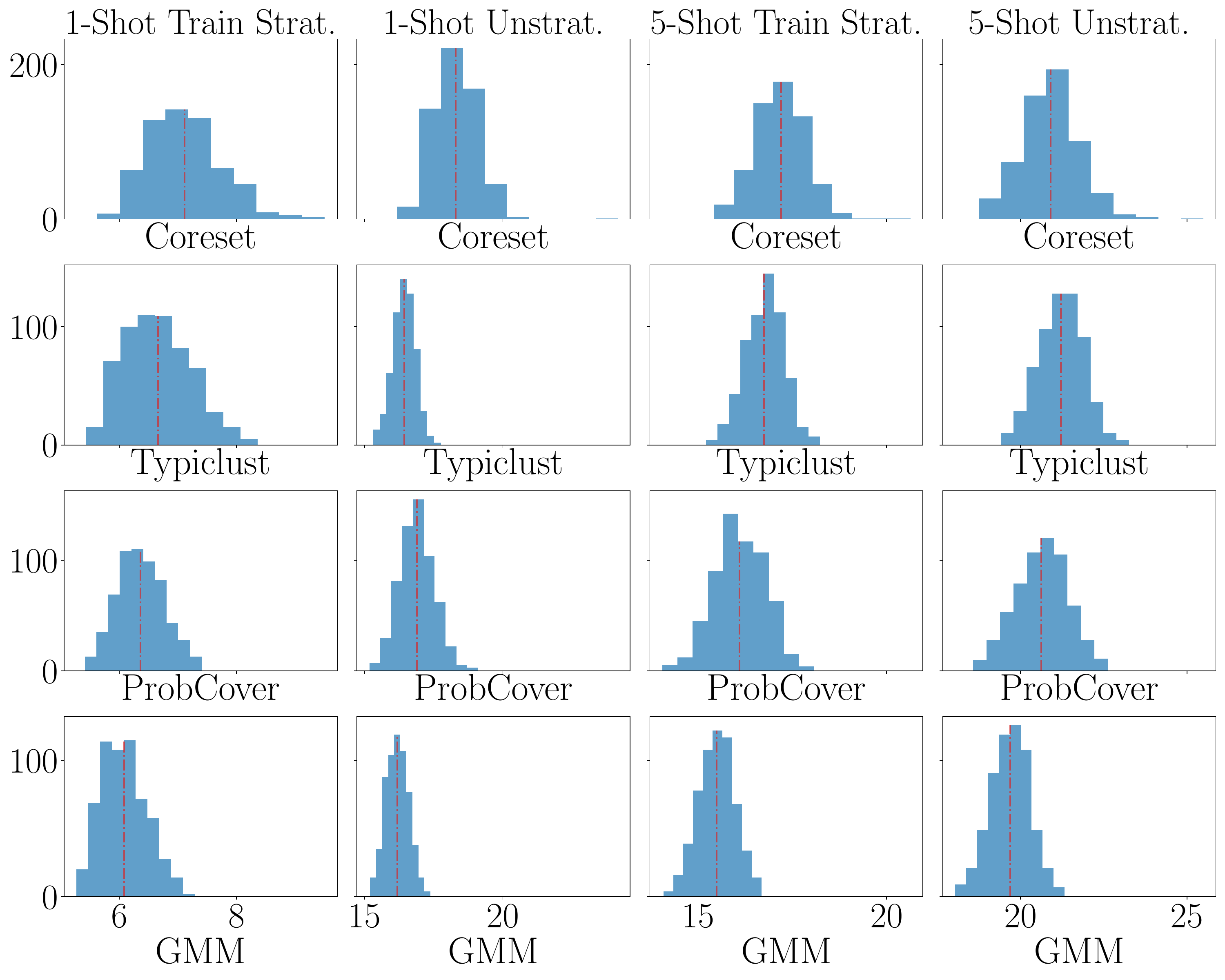}
        \caption{Distributions of distances}
        \label{app:fig:al_quality_distance}
    \end{subfigure}
    \hfill
    \begin{subfigure}[b]{0.48\textwidth} %
        \captionsetup{font=small}
        \includegraphics[width=\textwidth]{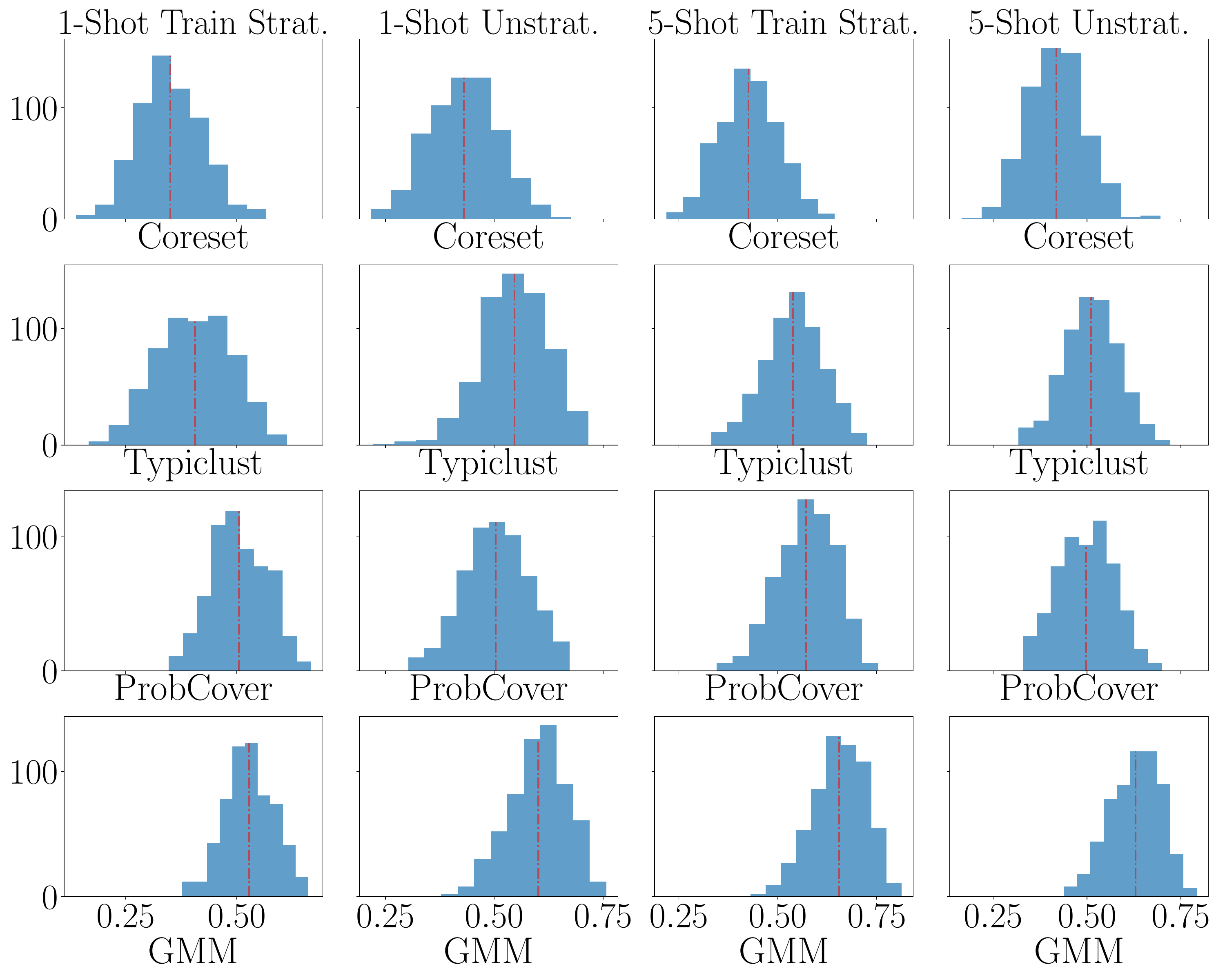}
        \caption{Distributions of accuracies}
        \label{app:fig:al_quality_acc}
    \end{subfigure}
    \caption{Estimation of goodness of selected data points on MiniImageNet with ANIL using the distribution of (a) the distance between the unlabeled points and closest selected points, and (b) the equality between the true labels of unlabeled points and labels of the closest select points. Red dotted lines show mean values.}
    \label{app:fig:al_quality}
\end{figure*}

\newcolumntype{E}{>{\centering\arraybackslash}p{6.3em}}
\begin{table*}[t!]
\begin{minipage}{\textwidth}
\centering
\fontsize{4.8}{6.3}\selectfont
\begin{tabu}{c|c|E|E|E|E|E|E}
\hline
\multirow{2}{*}{Data \& Model} & \multirow{2}{*}{Clustering} & \multicolumn{3}{c|}{1-Shot} & \multicolumn{3}{c}{5-Shot} \\
\cline{3-8}
& & Fully strat. & Train strat. & Unstrat. & Fully strat. & Train strat. & Unstrat. \\
\hline
\hline
\multirow{3}{*}{\makecell{MiniImage. \\ MAML}} & $k$-means  & 56.75 $\pm$ 0.20 & \bftab{33.29 $\pm$ 0.26} & 37.26 $\pm$ 0.18 & 65.76 $\pm$ 0.18 & 41.61 $\pm$ 0.24 & \bftab{59.17 $\pm$ 0.20}   \\
& $k$-means$^{++}$ & 56.12 $\pm$ 0.26 & 32.87 $\pm$ 0.32 & \bftab{38.53 $\pm$ 0.21} & 65.49 $\pm$ 0.21 & 43.61 $\pm$ 0.32 & 58.63 $\pm$ 0.26 \\
 & GMM & \bftab{58.82 $\pm$ 0.24} & \bftab{33.34 $\pm$ 0.24} & 37.68 $\pm$ 0.19 & \bftab{67.18 $\pm$ 0.18} & \bftab{54.35 $\pm$ 0.20} & \bftab{59.05 $\pm$ 0.20} \\
 \hline
 \multirow{3}{*}{\makecell{FC100 \\ ProtoNet}} &
 $k$-means 
 & \bftab{50.20} $\pm$ 0.17 & 29.69 $\pm$ 0.20 & \bftab{35.03 $\pm$ 0.23} & 54.07 $\pm$ 0.17 & 41.42 $\pm$ 0.23 & 41.34 $\pm$ 0.23 \\
 & $k$-means$^{++}$ & 49.91 $\pm$ 0.17 & 27.27 $\pm$ 0.22 & \bftab{34.93 $\pm$ 0.27} & \bftab{54.72 $\pm$ 0.30} & 41.61 $\pm$ 0.39 & 42.64 $\pm$ 0.39 \\
 & GMM & \bftab{50.22 $\pm$ 0.18} & \bftab{34.23 $\pm$ 0.23} & \bftab{35.03 $\pm$ 0.23} &  \bftab{54.76 $\pm$ 0.17} & \bftab{46.30 $\pm$ 0.21} & \bftab{47.03 $\pm$ 0.20} \\
\Xhline{2\arrayrulewidth}
\end{tabu}
\end{minipage}%
\vspace{3mm}
\caption{Comparison of GMM and $k$-Means selections on MiniImageNet and FC100 using MAML and ProtoNet.}
\label{app:tbl:kmeans_vs_gmm}
\end{table*}

\section{Comparison to $k$-Means based methods }
\label{app:sec:kmeans}
\cref{app:tbl:kmeans_vs_gmm} compares the proposed GMM method to $k$-means and $k$-means$^{++}$ since they are closely related.

The performance of GMM, $k$-means and $k$-means$^{++}$ are similar in general but for some cases, GMM is significantly better than the others.
We conjecture it is because some features are more important than the others, and since GMM takes it into account using Mahalanobis distance (instead of Euclidean distance used in $k$-means), it selects data points that represents nearby data points better.

\section{Difficulty of Tuning $\delta$ Parameter for ProbCover}
\label{app:sec:prob_cover}

In Section 3.2 of Yehuda~\etal\cite{probcover2022yehuda}, the authors proposed to tune the radius $\delta$ based on the purity defined as,
\begin{align}
    \pi(\delta) = P(\{x: B_\delta(x) \text{ is pure} \}) \,\,\, \text{ where } \,\,\, B_\delta(x) = \{x': \lVert x' - x \rVert_2 \leq \delta  \}
\end{align}
Here, a ball $B_\delta(x)$ is ``pure" if $f(x') = y, \, \forall x' \in B_\delta(x)$ where $y$ is the label of $x$.
As the radius $\delta$ increases, the purity decreases monotonically.
They choose the optimal radius $\delta^*$ as $\delta^* = \max\{ \delta : \pi(\delta) \geq 0.95 \}$.
More specifically, they first run k-means with k being the number of classes. 
Then, the purity is measured using the k-means assignment as pseudo-labels.

In their setting (pool-based active learning for image classification), since it is hard to obtain meaningful features from a model trained only a few examples, they use the features from self-supervised learning methods such as SimCLR~\cite{simclr2020chen}
It is, however, not the case for meta-learning.
In meta-test time, the features from the meta learner are usually more meaningful than self-supervised learning features.
Hence, we use the mete learner's features to estimate the optimal radius for ProbCover.
Following the original paper, we first run k-means and compute the purity in the same way.
Since the features can differ by meta learning algorithms and the number of shots, we provide the plots for different algorithms as well as 1 and 5-Shots as shown in \cref{app:fig:probcover_radius} (we select the optimal radius $\delta$ based on these plots throughout the experiments).
For \cref{app:fig:probcover_radius}(a)-(f), we also provide the meta-test performance of stratified and unstratified versions of Random selection to demonstrate that the estimated optimal radius and best radius for meta-test accuracy do not align.

Another difficulty of estimating the optimal radius is that it is hard to set a search space for the radius.
As shown in the x-axis of \cref{app:fig:probcover_radius}, the reasonable search space varies significantly depending on the meta-learning algorithms and datasets we use.
In Yehuda~\etal, this was less of a problem since they use SimCLR features, which are normalized: the range of the radius is in $[0, 1]$.
However, as shown in \cref{app:sec:self_sup}, if we use SimCLR features in meta-test time to actively select context sets, the performance generally drops.

\begin{figure*}[t!]
    \centering
    \begin{subfigure}[b]{0.24\textwidth} %
        \includegraphics[width=\textwidth]{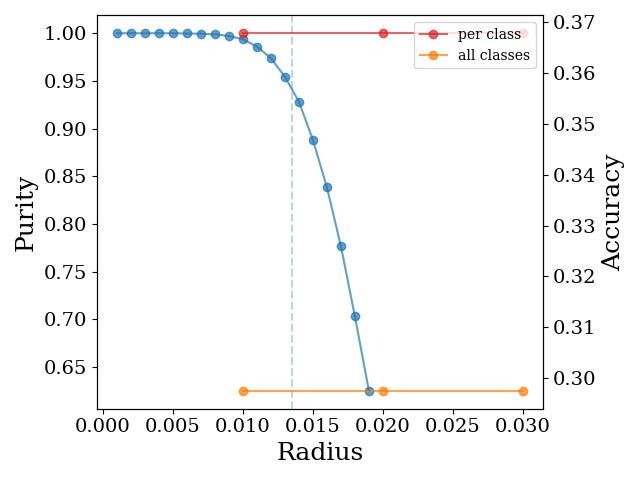}
        \caption{Proto FC100 1-shot}
        \label{fig:proto_fc100_1shot}
    \end{subfigure}
    \hfill
    \begin{subfigure}[b]{0.24\textwidth} %
        \includegraphics[width=\textwidth]{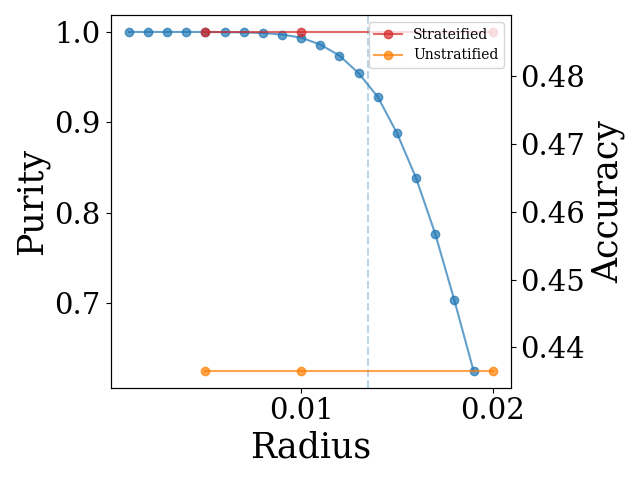}
        \caption{Proto FC100 1-shot}
        \label{fig:proto_fc100_5shot}
    \end{subfigure}
    \hfill
    \begin{subfigure}[b]{0.24\textwidth} %
        \includegraphics[width=\textwidth]{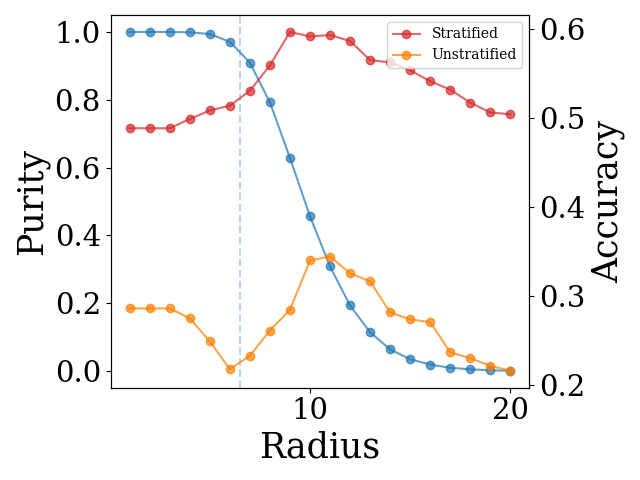}
        \caption{MAML Mini. 1-shot}
        \label{fig:maml_mini_1shot}
    \end{subfigure}
    \hfill
    \begin{subfigure}[b]{0.24\textwidth} %
        \includegraphics[width=\textwidth]{figures/purity/maml_mini_1shot.png}
        \caption{MAML Mini. 5-shot}
        \label{fig:maml_mini_5shot}
    \end{subfigure}
    \hfill
    \begin{subfigure}[b]{0.24\textwidth} %
        \includegraphics[width=\textwidth]{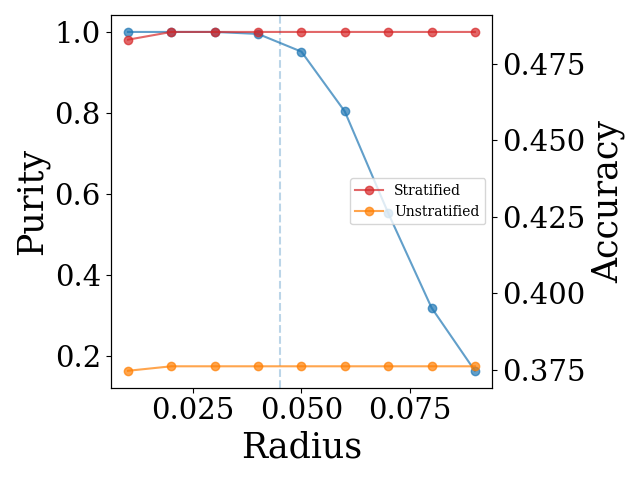}
        \caption{ANIL Tiered. 1-shot}
        \label{fig:anil_tiered_1shot}
    \end{subfigure}
    \hfill
    \begin{subfigure}[b]{0.24\textwidth} %
        \includegraphics[width=\textwidth]{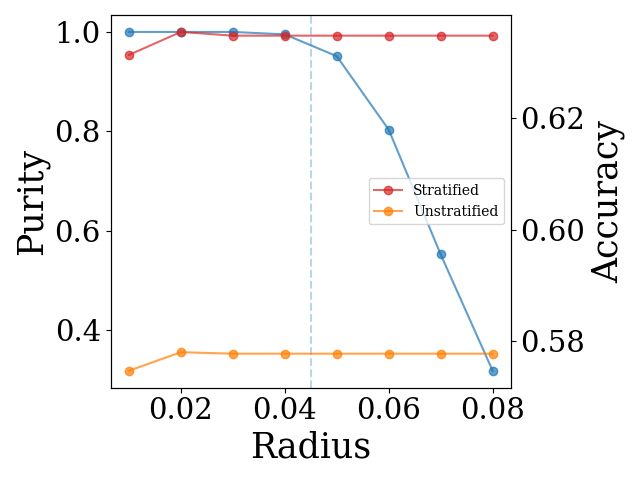}
        \caption{ANIL Tiered. 5-shot}
        \label{fig:anil_tiered_5shot}
    \end{subfigure}
    \hfill
    \begin{subfigure}[b]{0.24\textwidth} %
        \includegraphics[width=\textwidth]{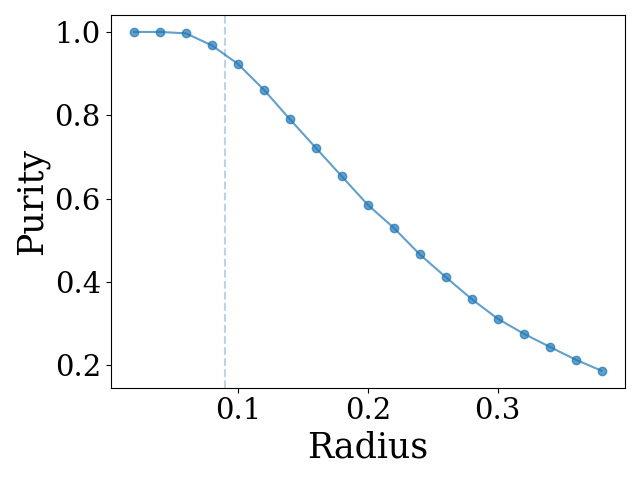}
        \caption{Baseline++ CUB}
        \label{fig:baseline_cub}
    \end{subfigure}
    \hfill
    \begin{subfigure}[b]{0.24\textwidth} %
        \includegraphics[width=\textwidth]{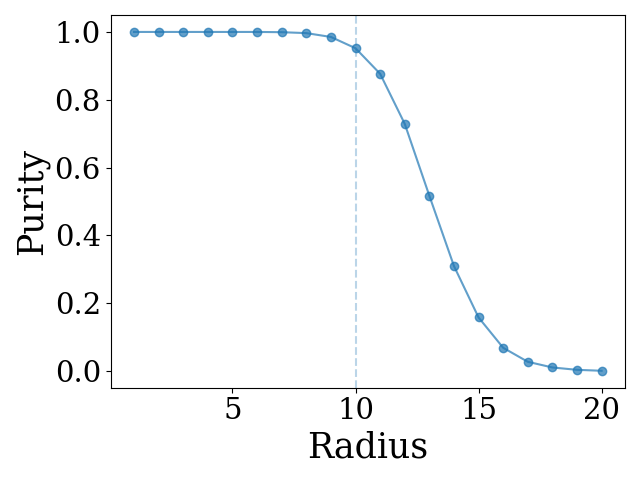}
        \caption{SimpleShot Mini.}
        \label{fig:simpleshot_mini}
    \end{subfigure}
    \caption{Estimation of the optimal radius for ProbCover in meta-learning}
    \label{app:fig:probcover_radius}
\end{figure*}

\section{Additional Experimental Results for Classification}
In this section, we provide addtional experimental results for few-shot image classification.
In \cref{app:tbl:anil_tieredimagenet}, we compare the active learning strategies for ANIL~\cite{anil2019raghu} on the TieredImageNet dataset.
Similarly, \cref{app:tbl:metaopt_fc100} provides the results with MetaOptNet~\cite{metaopt2019lee} on FC100 dataset.
\cref{app:tbl:simpleshot_mini}, \cref{app:tbl:proto_mini}, and \cref{app:tbl:anil_mini} are for SimpleShot~\cite{simpleshot2019wang}, ProtoNet~\cite{prototypical2017snell}, and ANIL~\cite{anil2019raghu} on MiniImageNet, respectively.
Note that Entropy and Margin selections are not applicable for MetaOptNet-SVM.
Regardless of meta-learning algorithm and dataset, GMM significantly outperforms the other active learning methods, and some of them are worse than the Random selection.

\begin{table*}[t!]
\begin{minipage}{\textwidth}
\centering
\fontsize{6.75}{9.75}\selectfont
\begin{tabu}{c|D|D|D|D|D|D}
\hline
\multirow{2}{*}{$\Pick_\theta^\eval$} & \multicolumn{3}{c|}{1-Shot} & \multicolumn{3}{c}{5-Shot}  \\
\cline{2-7}
 & Fully strat. & Train strat. & Unstrat. & Fully strat. & Train strat. & Unstrat. \\
\hline
\hline
Random & $47.55 \pm 0.18$ & \second{$38.19 \pm 0.16$} & $34.79 \pm 0.15$ & \second{$63.84 \pm 0.17$} & \second{$57.92 \pm 0.23$} & \second{$57.56 \pm 0.18$} \\
Entropy & $43.89 \pm 0.16$ & $32.73 \pm 0.16$ & $26.33 \pm 0.14$ & $57.56 \pm 0.18$ & $40.23 \pm 0.18$ & $34.16 \pm 0.17$ \\
Margin & $47.35 \pm 0.17$ & $36.01 \pm 0.14$ & $30.79 \pm 0.14$ & $62.87 \pm 0.17$ & $54.89 \pm 0.24$ & $56.76 \pm 0.17$ \\
DPP & \third{$49.28 \pm 0.17$} & \third{$38.17 \pm 0.15$} & \third{$36.52 \pm 0.15$} & $63.24 \pm 0.19$ & $57.28 \pm 0.21$ & \third{$57.23 \pm 0.18$} \\
Coreset& $47.32 \pm 0.18$ & $36.97 \pm 0.20$ & \second{$40.72 \pm 0.14$} & $56.93 \pm 0.18$ & $47.68 \pm 0.22$ & $52.89 \pm 0.17$\\
Typiclust & \second{$52.95 \pm 0.18$} & $37.21 \pm 0.17$ & $34.05 \pm 0.14$ & $63.13 \pm 0.19$ & $55.84 \pm 0.22$ & $56.76 \pm 0.17$ \\
ProbCover & $48.53 \pm 0.53$ & $37.61 \pm 0.49$ & $34.53 \pm 0.43$ & \third{$63.48 \pm 0.51$} & \third{$57.77 \pm 0.56$} & $57.12 \pm 0.58$ \\
GMM (Ours)& \first{$60.29 \pm 0.19$} & \first{$50.92 \pm 0.22$} & \first{$42.17 \pm 0.17$} & \first{$66.48 \pm 0.18$} & \first{$60.12 \pm 0.24$} & \first{$60.28 \pm 0.17$} \\
\Xhline{2\arrayrulewidth}
\end{tabu}
\end{minipage}%
\vspace{3mm}
\caption{5-Way K-Shot on TieredImageNet with ANIL, with $\Pick_\theta^\train$ random.}
\label{app:tbl:anil_tieredimagenet}
\vspace{-3mm}
\end{table*}

\begin{table*}[t!]
\begin{minipage}{\textwidth}
\centering
\fontsize{6.75}{9.75}\selectfont
\begin{tabu}{c|D|D|D|D|D|D}
\hline
\multirow{2}{*}{$\Pick_\theta^\eval$} & \multicolumn{3}{c|}{1-Shot} & \multicolumn{3}{c}{5-Shot}  \\
\cline{2-7}
 & Fully strat. & Train strat. & Unstrat. & Fully strat. & Train strat. & Unstrat. \\
\hline
\hline
Random & 40.41 $\pm$ 0.74 & \second{$31.96 \pm 0.56$} & \third{$32.76 \pm 0.63$} & \third{$53.11 \pm 0.66$} & \third{$47.73 \pm 0.70$} & \second{$47.48 \pm 0.76$} \\
DPP & 40.47 $\pm$ 0.80 & \third{$30.33 \pm 0.67$} & \second{$33.41 \pm 0.66$} & 51.44 $\pm$ 0.68 & \second{$48.21 \pm 0.67$} & \third{$47.45 \pm 0.68$}  \\
Coreset & 39.20 $\pm$ 0.71 & 27.55 $\pm$ 0.66 & 30.16 $\pm$ 0.69 & 46.80 $\pm$ 0.67  & 24.08 $\pm$ 0.65  & 25.75 $\pm$ 0.72 \\
Typiclust & \second{$45.20 \pm 0.78$} & 26.35 $\pm$ 0.47 & 27.00 $\pm$ 0.43 & 52.39 $\pm$ 0.66 & 23.97 $\pm$ 0.42 & 24.12 $\pm$ 0.39 \\
ProbCover & \third{$41.93 \pm 0.67$} & 26.87 $\pm$ 0.62 & 27.43 $\pm$ 0.48 & \second{$54.36 \pm 0.76$} & 37.00 $\pm$ 0.69 & 38.33 $\pm$ 0.76 \\ %
GMM (Ours)& \first{$51.16 \pm 0.67$} & \first{$40.89 \pm 0.74$} & \first{$41.61 \pm 0.87$} & \first{$60.48 \pm 0.86$} & \first{$52.68 \pm 0.70$} & \first{$51.79 \pm 0.70$}\\
\Xhline{2\arrayrulewidth}
\end{tabu}
\end{minipage}%
\vspace{3mm}
\caption{5-Way K-Shot on FC100 with MetaOptNet, with $\Pick_\theta^\train$ random.}
\label{app:tbl:metaopt_fc100}
\vspace{-3mm}
\end{table*}

\newcolumntype{F}{>{\centering\arraybackslash}p{5.8em}}
\begin{table*}[t!]
\begin{minipage}{\textwidth}
\centering
\fontsize{8.5}{11.5}\selectfont
\begin{tabu}{c|F|F|F|F}
\hline
\multirow{2}{*}{$\Pick_\theta^\eval$} & \multicolumn{2}{c|}{1-Shot} & \multicolumn{2}{c}{5-Shot}  \\
\cline{2-5}
 & Fully strat. & Train strat. & Fully strat. & Train strat. \\
\hline
\hline
Random & $45.15 \pm 0.73$ & $26.28 \pm 0.61$ & \third{$61.22 \pm 0.72$} & \first{$51.89 \pm 0.73$} \\
Entropy & $37.08 \pm 0.75$ & $21.62 \pm 0.37$ & $47.93 \pm 0.74$ & $32.74 \pm 0.60$ \\
Margin & $41.53 \pm 0.73$ & $24.28 \pm 0.51$ & \second{$62.15 \pm 0.70$} &  \third{$50.90 \pm 0.75$}\\
DPP & $44.52 \pm 0.75$ & \third{$26.32 \pm 0.58$} & $60.93 \pm 0.72$ & \second{$51.79 \pm 0.75$} \\
Coreset & \third{$45.85 \pm 0.73$} & \second{$27.04 \pm 0.54$} & $56.48 \pm 0.72$ & $40.39 \pm 0.68$ \\
Typiclust & $44.53 \pm 0.71$ & $22.97 \pm 0.42$ & $34.21 \pm 0.77$ & $20.04 \pm 0.06$ \\
ProbCover & \second{$49.32 \pm 0.71$} & $24.61 \pm 0.52$ & $55.60 \pm 0.66$ & $32.24 \pm 0.67$ \\
GMM (Ours)& \first{$52.77 \pm 0.72$} & \first{$28.17 \pm 0.64$} & \first{$62.64 \pm 0.71$} & $50.40 \pm 0.75$ \\
\Xhline{2\arrayrulewidth}
\end{tabu}
\end{minipage}%
\vspace{3mm}
\caption{5-Way K-Shot on MiniImageNet with SimpleShot, with $\Pick_\theta^\train$ random.}
\label{app:tbl:simpleshot_mini}
\vspace{-1mm}
\end{table*}

\label{app:sec:additional_exp}

\section{Additional Experimental Details for Regression}
\label{app:sec:additional_exp_reg}

Gao~\etal propose the Distractor and ShapeNet1D datasets to compare meta learning algorithms for vision regression tasks.
They evaluate meta learners for intra-category (IC) and cross-category (CC) inputs where CC corresponds to the cross-domain in few-shot image classification.

\textbf{Distractor} consists of $10$ object classes for a training set and $2$ novel classes for CC evaluation.
Each class contains $1,000$ randomly sampled objects from ShapeNetCoreV2~\cite{chang2015shapenet}.
$20\%$ of training set is reserved for IC evaluation. 
In this dataset, each image consists of two objects: the object of interest and a distractor object, which are positioned randomly. The goal is to recognize and locate the object of interest within the image in the presence of a distractor.

\textbf{ShapeNet1D}~\cite{meta_regression2022gao} consists of $27$ object classes for a training set and $3$ object classes for CC evaluation.
Each object class contains $50$ images, and $10$ images are used for IC evaluation.
ShapeNet1D aims to predict the 1D pose, i.e., rotation angle, around the azimuth axis of an object.

To analyze these vision regression tasks, we compare various active learning strategies in the 2-shot setting.
We use CNP for Distractor, NP for ShapeNet1D.
More details about the models can be found in \cref{app:sec:ml}.

\begin{table*}[t!]
\begin{minipage}{\textwidth}
\centering
\fontsize{6.75}{9.75}\selectfont
\begin{tabu}{c|D|D|D|D|D|D}
\hline
\multirow{2}{*}{$\Pick_\theta^\eval$} & \multicolumn{3}{c|}{1-Shot} & \multicolumn{3}{c}{5-Shot}  \\
\cline{2-7}
 & Fully strat. & Train strat. & Unstrat. & Fully strat. & Train strat. & Unstrat. \\
\hline
\hline
Random & 47.70 $\pm$ 0.20 & \second{$39.65 \pm 0.28$} & \second{$38.72 \pm 0.27$} & \second{$64.66 \pm 0.18$} & \third{$57.36 \pm 0.27$} & \first{$57.42 \pm 0.25$} \\
Entropy & 44.33 $\pm$ 0.20 & 36.35 $\pm$ 0.28 & 34.87 $\pm$ 0.27 & 61.23 $\pm$ 0.19 & 49.83 $\pm$ 0.31 & 48.46 $\pm$ 0.32 \\
Margin & 47.07 $\pm$ 0.20 & 37.69 $\pm$ 0.27 & 37.84 $\pm$ 0.28 & 63.79 $\pm$ 0.18 & 55.25 $\pm$ 0.29 & 56.15 $\pm$ 0.27  \\
DPP & 47.90 $\pm$ 0.20 & 39.17 $\pm$ 0.28 & \third{$37.89 \pm 0.26$} & \third{$64.36 \pm 0.19$} & \second{$57.48 \pm 0.26$} & \second{$57.37 \pm 0.25$}  \\
Coreset & 47.86 $\pm$ 0.20 & \third{$39.51 \pm 0.26$} & 37.79 $\pm$ 0.26 & 55.09 $\pm$ 0.20 & 50.14 $\pm$ 0.29 & 50.27 $\pm$ 0.28  \\
Typiclust & \second{$59.51 \pm 0.17$} & 38.47 $\pm$ 0.27 & 37.57 $\pm$ 0.27 & 61.02 $\pm$ 0.19 & 51.82 $\pm$ 0.31 & 52.02 $\pm$ 0.30 \\
ProbCover & \third{$48.51 \pm 0.20$} & 35.25 $\pm$ 0.26 & 34.50 $\pm$ 0.25 & 43.61 $\pm$ 0.19 & 38.63 $\pm$ 0.21 & 38.24 $\pm$ 0.20  \\
GMM (Ours) & \first{$64.50 \pm 0.16$} & \first{$47.88 \pm 0.32$} & \first{$44.71 \pm 0.29$} & \first{$67.03 \pm 0.19$} & \first{$57.55 \pm 0.29$} & \third{$56.44 \pm 0.30$} \\
\Xhline{2\arrayrulewidth}
\end{tabu}
\end{minipage}%
\captionsetup{font=small}
\vspace{3mm}
\caption{5-Way K-Shot on MiniImageNet with ProtoNet, with $\Pick_\theta^\train$ random.}
\label{app:tbl:proto_mini}
\end{table*}

\begin{table*}[t!]
\begin{minipage}{\textwidth}
\centering
\fontsize{6.75}{9.75}\selectfont
\begin{tabu}{c|D|D|D|D|D|D}
\hline
\multirow{2}{*}{$\Pick_\theta^\eval$} & \multicolumn{3}{c|}{1-Shot} & \multicolumn{3}{c}{5-Shot}  \\
\cline{2-7}
 & Fully strat. & Train strat. & Unstrat. & Fully strat. & Train strat. & Unstrat. \\
\hline
\hline
Random & 46.59 $\pm$ 0.19 & 36.70 $\pm$ 0.19 & 34.79 $\pm$ 0.18 & \third{$61.35 \pm 0.19$} & \third{$55.24 \pm 0.20$} & \third{$56.65 \pm 0.19$} \\
Entropy & 44.63 $\pm$ 0.20 & 35.51 $\pm$ 0.18 & 27.35 $\pm$ 0.14 & 55.09 $\pm$ 0.19 & 39.71 $\pm$ 0.20 & 37.45 $\pm$ 0.19 \\
Margin& 46.58 $\pm$ 0.19 & 36.60 $\pm$ 0.19 & 32.46 $\pm$ 0.18 & 55.62 $\pm$ 0.19 & 40.40 $\pm$ 0.20 & 37.67 $\pm$ 0.19 \\
DPP & 47.33 $\pm$ 0.19 & \third{$37.45 \pm 0.17$} & 37.76 $\pm$ 0.18 & 61.08 $\pm$ 0.19 & \second{$56.18 \pm 0.18$} & \second{$57.08 \pm 0.18$}  \\
Coreset & 46.40 $\pm$ 0.21 & \second{$38.37 \pm 0.17$} & \second{$41.34 \pm 0.17$} & 53.74 $\pm$ 0.20 & 47.81 $\pm$ 0.20 & 51.62 $\pm$ 0.19  \\
Typiclust & \second{$54.44 \pm 0.18$} & 36.78 $\pm$ 0.17 & 34.52 $\pm$ 0.19 & 60.87 $\pm$ 0.18 & 52.56 $\pm$ 0.20 & 55.11 $\pm$ 0.19  \\
ProbCover & \third{$51.56 \pm 0.18$} & 27.49 $\pm$ 0.15 & \first{$41.46 \pm 0.17$}  & \second{$61.68 \pm 0.18$} & 53.80 $\pm$ 0.20 & 42.70 $\pm$ 0.22 \\
GMM (Ours) & \first{$58.50 \pm 0.18$} & \first{$48.13 \pm 0.20$} & \third{$40.26 \pm 0.18$} & \first{$65.14 \pm 0.17$} & \first{$59.01 \pm 0.20$} & \first{$61.48 \pm 0.19$}  \\
\Xhline{2\arrayrulewidth}
\end{tabu}
\end{minipage}%
\captionsetup{font=small}
\vspace{3mm}
\caption{5-Way K-Shot on MiniImageNet with ANIL, with $\Pick_\theta^\train$ random.}
\label{app:tbl:anil_mini}
\end{table*}

\section{Comparison to Self-Supervised Features}
\label{app:sec:self_sup}

ProbCover and Typiclust use self-supervised features to actively select new data points to annotate, since there are not enough labeled data to train a classifier to output meaningful features.
Instead, they utilize the features from SimCLR~\cite{simclr2020chen}.
To validate if it is better to use the features from a meta learner than SimCLR in meta-learning, we compare SimCLR features to the features from either MAML or ProtoNet for Typiclust and ProbCover as shown in \cref{app:tbl:self_sup_typiclust} and \cref{app:tbl:self_sup_probcover}.
Here, we use MiniImageNet and FC100 datasets for MAML and ProtoNet, respecitvely as with \cref{tbl:maml_mini} and \cref{tbl:proto_fc100}.
For both Typiclust and ProbCover, although there are a couple of cases where SimCLR features are better, it is significantly worse than MAML and ProtoNet features in general.
It intuitively makes sense because 1) meta learners are trained with large enough data points and 2) it is likely that the information in self-supervised features do not align with that in meta learners.

\newcolumntype{G}{>{\centering\arraybackslash}p{6.0em}}
\begin{table*}[t!]
\begin{minipage}{\textwidth}
\centering
\fontsize{6.0}{9.0}\selectfont
\begin{tabu}{c|c|G|G|G|G|G|G}
\hline
\multirow{2}{*}{Dataset} & \multirow{2}{*}{Features} & \multicolumn{3}{c|}{1-Shot} & \multicolumn{3}{c}{5-Shot}  \\
\cline{3-8}
& & Fully strat. & Train strat. & Unstrat. & Fully strat. & Train strat. & Unstrat. \\
\hline
\hline
\multirow{2}{*}{Mini.} & MAML & \bftab{55.65} $\pm$ \bftab{0.18} & 27.45 $\pm$ 0.17 & \bftab{35.46} $\pm$ \bftab{0.18} & 64.16 $\pm$ 0.18 & \bftab{46.70} $\pm$ \bftab{0.21} & \bftab{57.83} $\pm$ \bftab{0.21} \\
& SimCLR & 44.84 $\pm$ 0.44 & \bftab{27.59} $\pm$ \bftab{0.35} & 34.80 $\pm$ 0.47 & \bftab{65.95} $\pm$ \bftab{0.43} & 36.03 $\pm$ 0.48 & 57.77 $\pm$ 0.47 \\
\hline 
\multirow{2}{*}{FC100} & ProtoNet & \bftab{46.01} $\pm$ \bftab{0.16} & \bftab{30.96} $\pm$ \bftab{0.19} & \bftab{30.61} $\pm$ \bftab{0.21} & 47.54 $\pm$ 0.17 & \bftab{43.61} $\pm$ \bftab{0.18} & \bftab{44.03} $\pm$ \bftab{0.21} \\
& SimCLR & 36.07 $\pm$ 0.44 & 29.60 $\pm$ 0.46 & 30.13 $\pm$ 0.45 & \bftab{48.59} $\pm$ \bftab{0.49} & 43.29 $\pm$ 0.49 & 43.89 $\pm$ 0.59 \\
\Xhline{2\arrayrulewidth}
\end{tabu}
\end{minipage}%
\vspace{3mm}
\caption{Comparison of MAML and SimCLR features for Typiclust.}
\label{app:tbl:self_sup_typiclust}
\end{table*}

\begin{table*}[t!]
\begin{minipage}{\textwidth}
\centering
\fontsize{6}{9}\selectfont
\begin{tabu}{c|c|G|G|G|G|G|G}
\hline
\multirow{2}{*}{Dataset} & \multirow{2}{*}{Features} & \multicolumn{3}{c|}{1-Shot} & \multicolumn{3}{c}{5-Shot}  \\
\cline{3-8}
& & Fully strat. & Train strat. & Unstrat. & Fully strat. & Train strat. & Unstrat. \\
\hline
\hline
\multirow{2}{*}{Mini.} & MAML & \bftab{52.81} $\pm$ \bftab{1.16} &  21.91 $\pm$  0.24 & \bftab{36.21} $\pm$ \bftab{0.18} & \bftab{64.70} $\pm$ \bftab{0.91} & \bftab{42.07} $\pm$ \bftab{0.49} & 23.40 $\pm$ 0.36 \\
& SimCLR & 47.57 $\pm$ 0.42 & \bftab{25.35} $\pm$ \bftab{0.38} & 32.19 $\pm$ 0.43 & 64.33 $\pm$ 0.39 & 36.64 $\pm$ 0.58 & \bftab{26.16} $\pm$ \bftab{0.43} \\
\hline 
\multirow{2}{*}{FC100} & ProtoNet & \bftab{48.66} $\pm$ \bftab{0.16} & \bftab{32.86} $\pm$ \bftab{0.22} & \bftab{33.58} $\pm$ \bftab{0.19} & \bftab{51.11} $\pm$ \bftab{0.17} & \bftab{44.20} $\pm$ \bftab{0.24} & 44.40 $\pm$ 0.24 \\
& SimCLR & 31.40 $\pm$ 0.42 & 29.53 $\pm$ 0.42 & 28.39 $\pm$ 0.43 & 47.11 $\pm$ 0.39 & 39.33 $\pm$ 0.54 & \bftab{45.40} $\pm$ \bftab{0.52} \\
\Xhline{2\arrayrulewidth}
\end{tabu}
\end{minipage}%
\vspace{3mm}
\caption{Comparison of MAML and SimCLR features for ProbCover.}
\label{app:tbl:self_sup_probcover}
\end{table*}

\section{Sequential Active-Meta Learning} \label{app:sec:sequential}

Although iterative sampling is more common in active learning, we have focused on sampling a context set at once because of the following two reasons.

First, even though we iterative label additional samples, the features do not change in most of meta-learning algorithms except for MAML. 
Even for other optimization-based methods such as ANIL, since the feature extractor is not updated during adaptation on a context set, the features will stay the same for iterative process of active learning. 
As we demonstrated with ProtoNet in \cref{app:fig:seq} (c)-(d) (details about experiments are below), although we iteratively add more labeled samples, the performance does not change much as the features do not change. In this case, selecting $N \times K$ samples at once is not different from iterative process while it is cheaper.

Furthermore, if we iteratively add labeled samples, it will quickly go beyond few-shot regime in meta-learning, which is often not that practical in real world settings. 
Suppose we have a meta learner trained in $5$-way $1$-Shot. 
It is reasonable to add $5$ samples per iteration since it is the minimum number to cover all the classes. But only after $5$ iterations, it will go few-shot regime where we typically have $25$ labeled context samples. It is even less practical for $5$-Shot case.

\begin{figure*}[t!]
    \centering
    \begin{subfigure}[b]{0.24\textwidth} %
        \includegraphics[width=\textwidth]{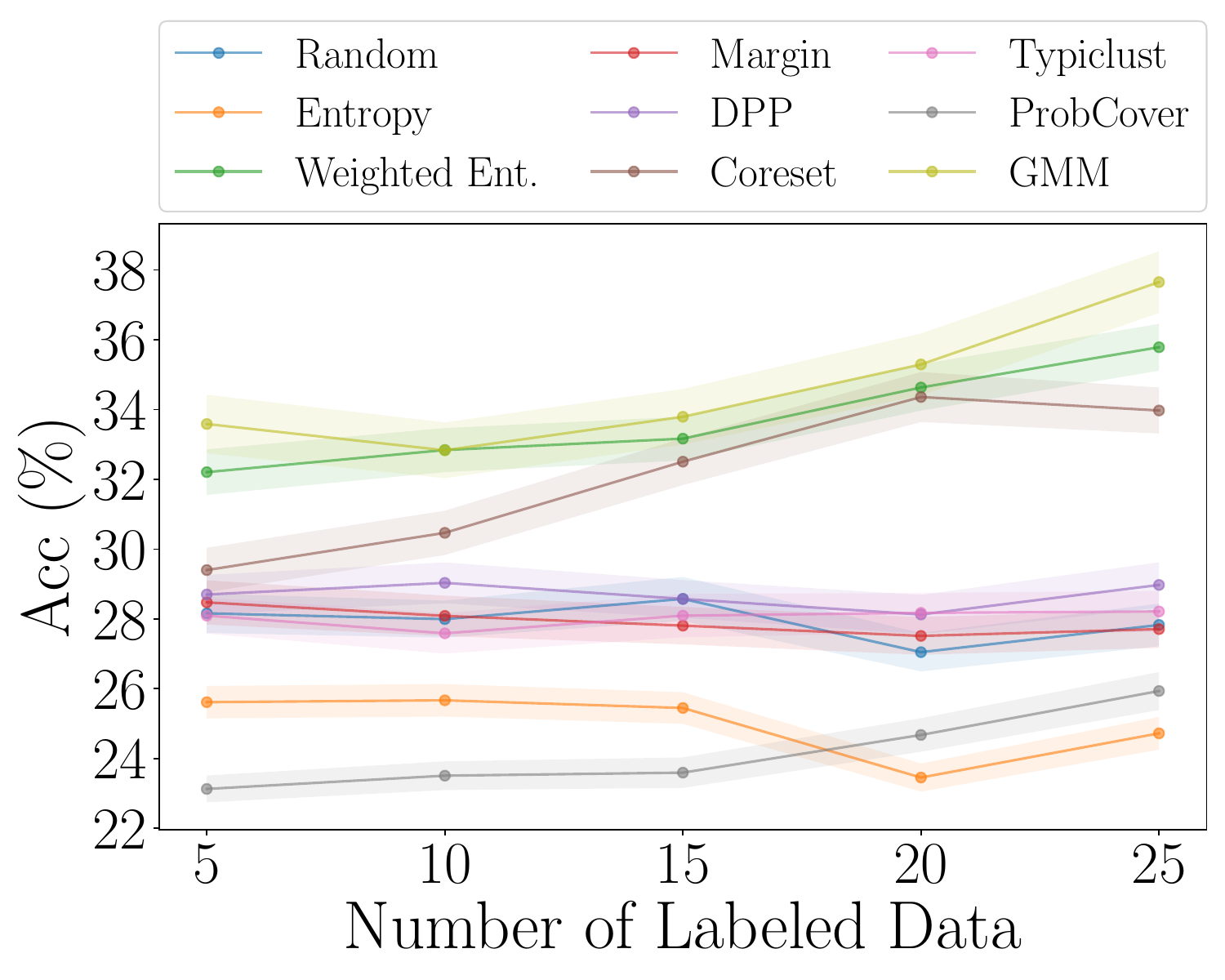}
        \caption{MAML: Train strat.}
        \label{app:fig:maml_mini_trainstrat}
    \end{subfigure}
    \hfill
    \begin{subfigure}[b]{0.24\textwidth} %
        \includegraphics[width=\textwidth]{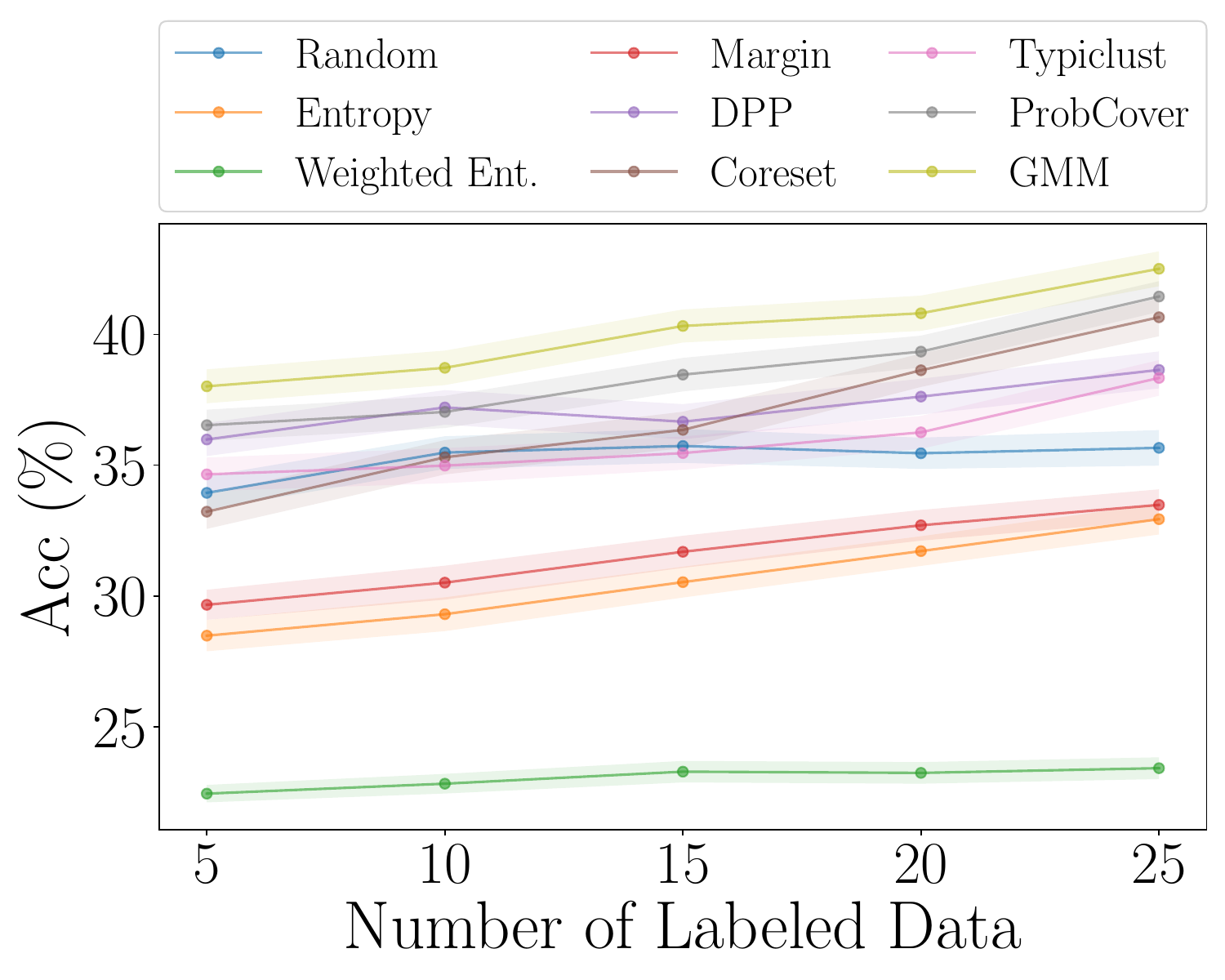}
        \caption{MAML: Unstrat.}
        \label{app:fig:maml_mini_unstrat}
    \end{subfigure}
    \hfill
    \begin{subfigure}[b]{0.24\textwidth} %
        \includegraphics[width=\textwidth]{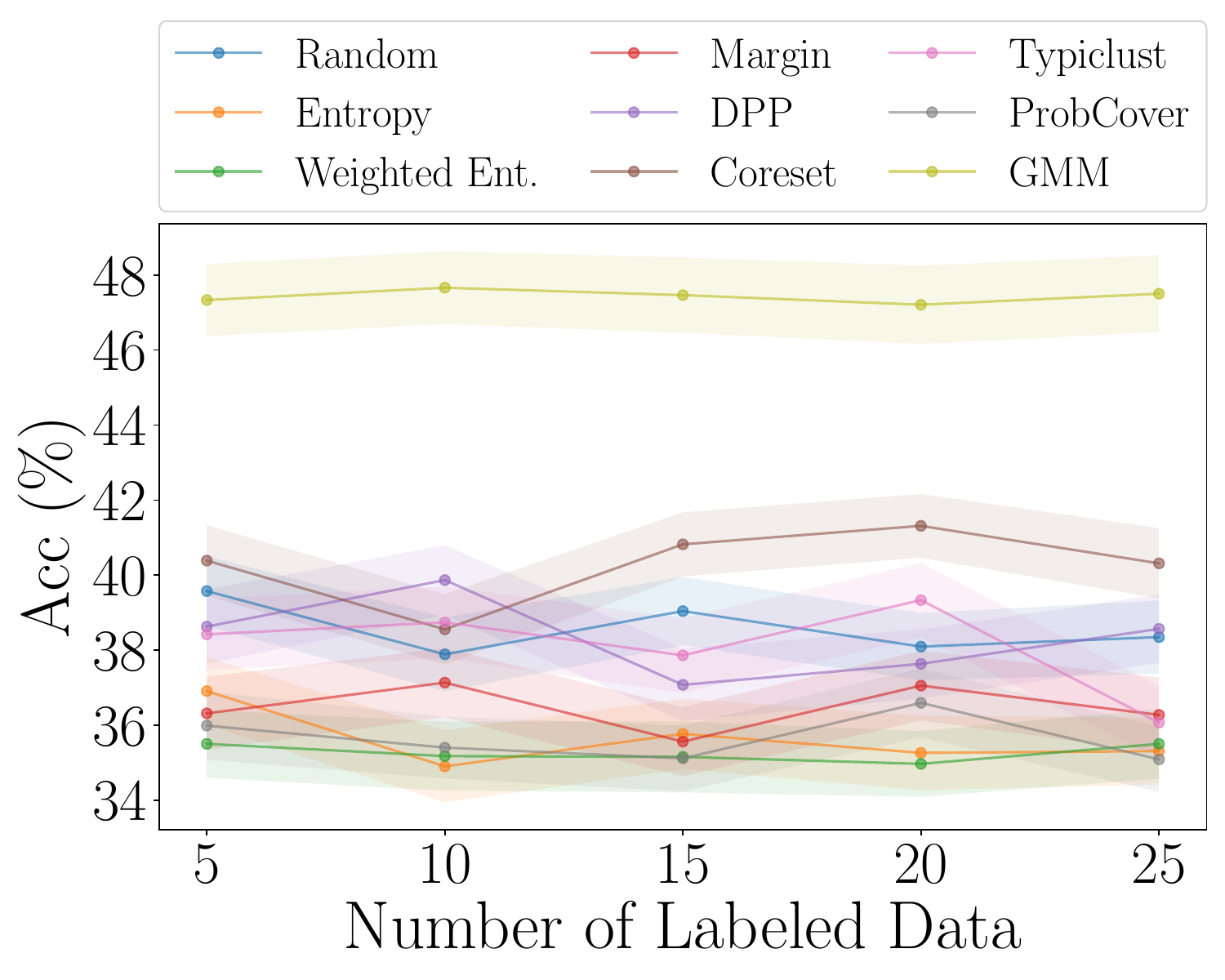}
        \caption{Proto: Train strat.}
        \label{app:fig:proto_mini_trainstrat}
    \end{subfigure}
    \hfill
    \begin{subfigure}[b]{0.24\textwidth} %
        \includegraphics[width=\textwidth]{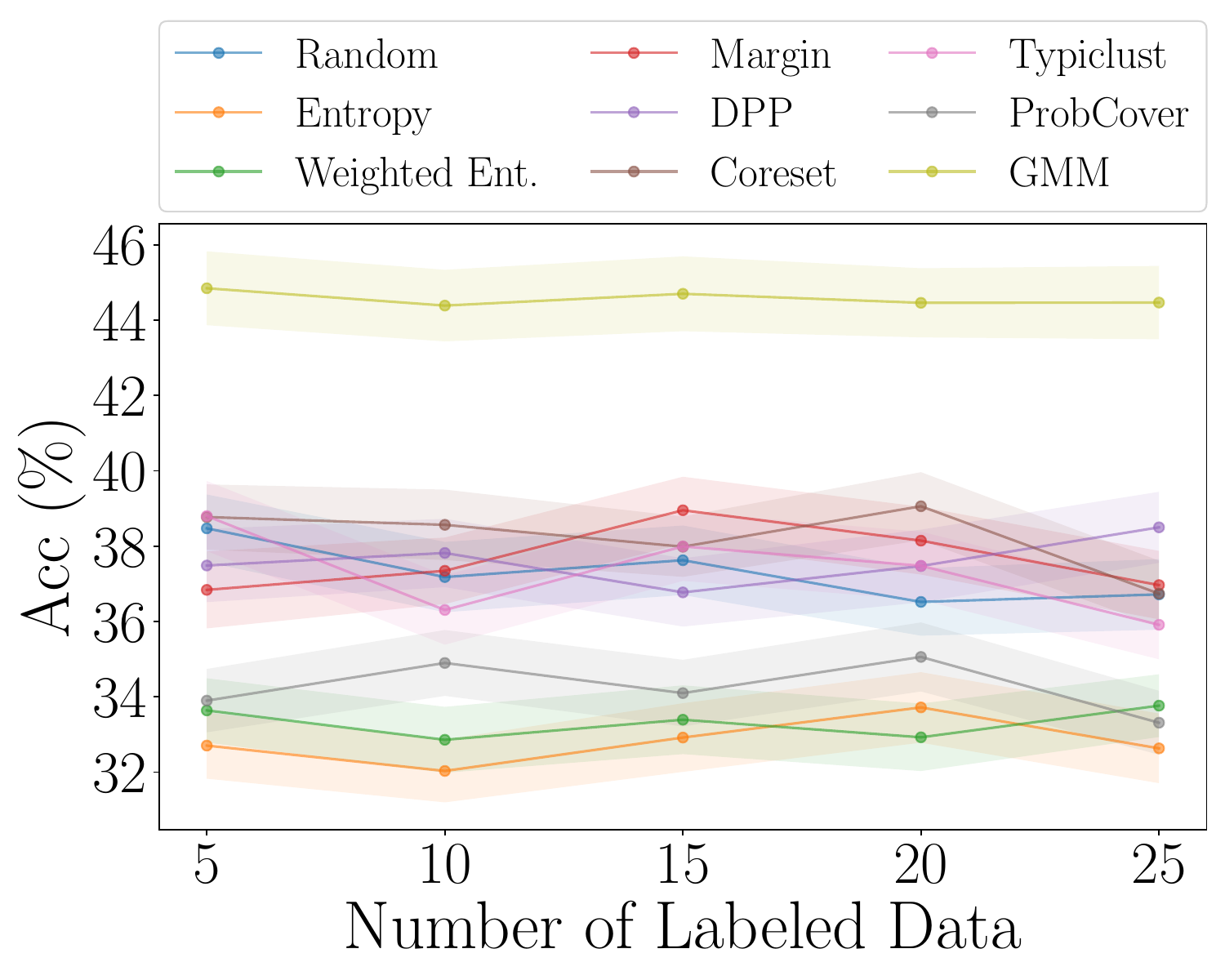}
        \caption{Proto: Unstrat.}
        \label{app:fig:proto_mini_unstrat}
    \end{subfigure}
    \caption{Test performance of MAML and ProtoNet on MiniImageNet with sequentially actively selected context sets. $5$ context samples are selected at each iteration until it reaches $25$.}
    \label{app:fig:seq}
    \vspace{3mm}
\end{figure*}

\begin{figure*}[t!]
    \centering
    \begin{subfigure}[b]{0.325\textwidth} %
        \includegraphics[width=0.99\textwidth]{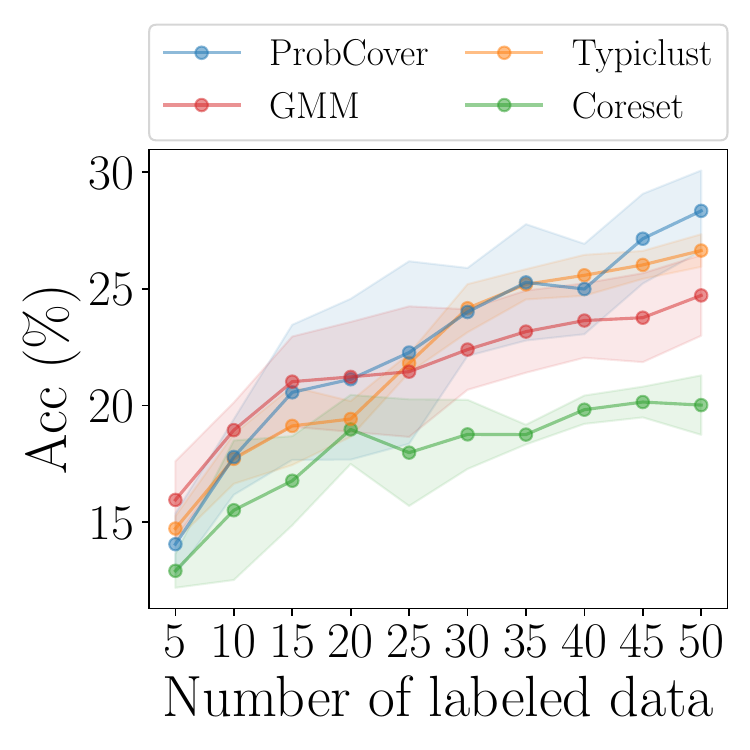}
        \caption{CIFAR10}
        \label{fig:img_cls_cifar10}
    \end{subfigure}
    \hfill
    \begin{subfigure}[b]{0.325\textwidth} %
        \includegraphics[width=0.99\textwidth]{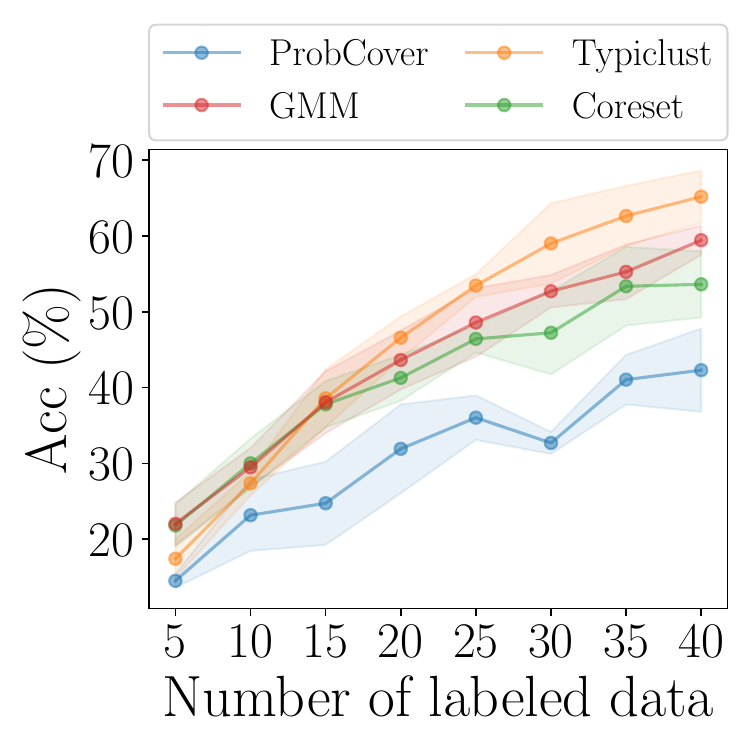}
        \caption{MNIST}
        \label{fig:img_cls_mnist}
    \end{subfigure}
    \hfill
    \begin{subfigure}[b]{0.325\textwidth} %
        \includegraphics[width=0.99\textwidth]{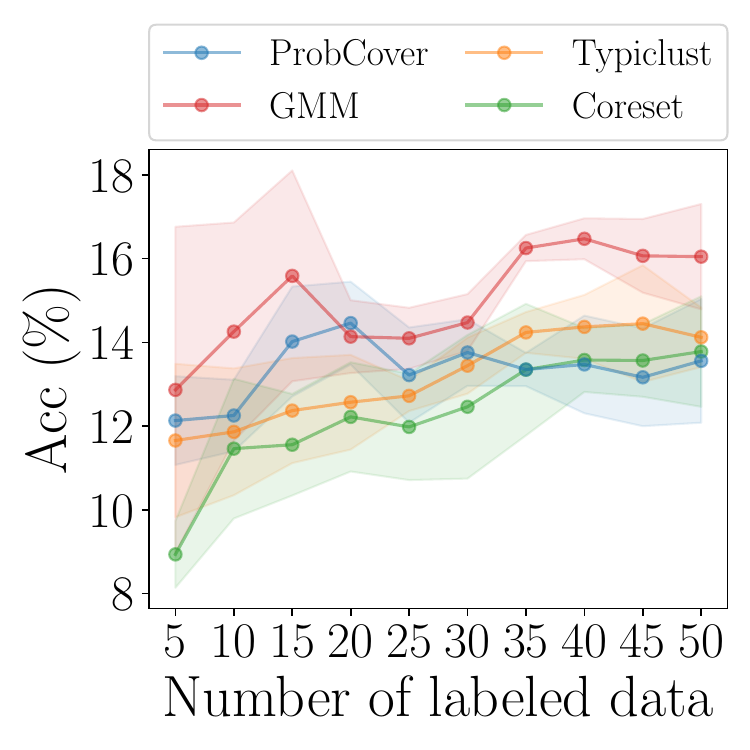}
        \caption{SVHN}
        \label{fig:img_cls_svhn}
    \end{subfigure}
    \caption{Low-budget active learning methods on image classification with very low budget.
    Mean and standard error of accuracy for three sets of SimCLR features, three runs per features.
    }
    \label{fig:img_cls}
    \vspace{3mm}
\end{figure*}

\cref{app:fig:seq} compare active learning methods for sequential setting where we select $5$ context samples at a time until the budget reaches $25$ samples.
Every time we select new context samples we may utilize them to maximize new label information.
For MAML, we update all the model parameters through adaptation steps.
It is, however, not applicable to the other meta-learning methods we use in this work including ProtoNet, since none of the other methods including optimization-based methods such as ANIL, do not update the parameters up to the penultimate layer.

As expected, the test performance of ProtoNet does not change much regardless of active learning methods.
But, the test performance of MAML gradually increases as we add more context samples.
In sequential active-meta learning, GMM still significantly outperforms other active learning methods.

\section{Fitting GMM using Expectation Maximization}
\label{app:sec:gmm}

In this section, we provide details about fitting GMM using the expectation maximization (EM) algorithm.
Although it is available in many literature, we add it here for completeness of our method.
The log-likelihood objective for a GMM is given by,
\begin{align}
    \ell(\theta) = \sum_{i=1}^{N} \log \left( \sum_{k=1}^{K} \pi_k \mathcal{N}(x_i | \mu_k, \Sigma_k) \right),
    \label{app:eq:gmm_obj}
\end{align}
where model parameters $\theta = \{(\pi_k, \mu_k, \Sigma_k)\}_{k=1}^K$ with $N$and $K$ being the number of samples and mixture components, respectively.

EM algorithm is an iterative algorithm where we alternatively conduct E-step and M-step as follows,
\begin{itemize}
    \item E-step: we compute the posterior probability $\omega_{ik}$ that represents $i$-th data point belongs to the $k$-th Gaussian component as,
    \begin{align}
        w_{ik} = \frac{\pi_k \mathcal{N}(x_i | \mu_k, \Sigma_k)}{\sum_{j=1}^{K} \pi_j \mathcal{N}(x_i | \mu_j, \Sigma_j)}
        \label{app:eq:e_step}
    \end{align}
    \item M-step: we maximize the log-likelihood in terms of the model parameters. Fortunately, for GMM, there are closed form solutions for each parameter.
    \begin{align}
        \pi_k &= \frac{1}{N} \sum_{i=1}^{N} w_{ik},\hspace{3mm}
        \mu_k = \frac{\sum_{i=1}^{N} w_{ik} x_i}{\sum_{i=1}^{N} w_{ik}},\hspace{3mm}
        \Sigma_k = \frac{\sum_{i=1}^{N} w_{ik} (x_i - \mu_k)(x_i - \mu_k)^T}{\sum_{i=1}^{N} w_{ik}}
        \label{app:eq:m_step}
    \end{align}
\end{itemize}
We repeat the E-step and M-step until convergence of the log-likelihood or for a fixed number of iteration time.
Please note that we use diagonal covariance $\Sigma_k$ since it is computationally efficient and often fits better in term of log-likelihood.

\section{Analysis on Low Budget Active Learning Methods}
\label{app:sec:further_low_budget}

We briefly explain why each active learning method does not perform as well as the proposed GMM method or even Random selection.
In this section, we discuss further on the inferiority of low budget active learning methods compared to GMM.
We conjecture it may be attributed to its implicit exploration of locally dense regions or inappropriate measure of representativeness.
Here we analyzed potential reasons of their failure in very low budget regime.
\begin{itemize}
    \item Typiclust: after conducting $k$-means, it selects samples for each cluster $j$, based on $$ \argmax_{x \in \text{clust}_j} \left( \frac{1}{K} \sum_{x_i \in KNN(x)} || x - x_i ||_2 \right) ^{-1} $$ where KNN denotes $k$-nearest neighbors of which size is fixed to 20. This measure seeks for locally dense region by selecting samples that are close to its nearest neighbors.
    \item ProbCover: it greedily finds the maximally covering samples given a fixed radius. This greedy algorithm provide $(1 - \frac{1}{e})$-approximation for the optimal solution but the gap with the optimal solution can be quite large. Also, the selection of the radius is hard as we discussed in \cref{app:sec:prob_cover}. When the radius is small, it tries to find samples that are in locally dense regions.
    \item DPP: it finds samples of which a kernel matrix (with a pre-defined kernel function) has the maximum determinant, which implicitly finds diverse samples. The determinant of a matrix, however, may not align with selecting maximum covering (or representative) samples. In particular, maximizing the determinant of the kernel matrix may lead to selecting samples far away from other samples.
\end{itemize}
Compared to these methods, GMM tries to find globally representative samples in non-greedy fashion (using expectation maximization). Also, its measure of covering other samples is in Mahalanobis distance, which intuitively makes more sense than the determinant of a kernel matrix as a measure. The proposed GMM method is also theoretically motivated by the \cref{prop:max-margin}, which says a classifier trained with the selected samples from GMM (cluster means) is a Bayes-optimal classifier under certain conditions.

\section{Training-Time Active Learning}
\label{app:sec:train_time}

\begin{figure}[h!]
\vspace{-7mm}
\begin{minipage}[t]{0.6\textwidth}
As mentioned in \cref{sec:conclusion}, we observe that active learning methods do not significantly change the generalization performance of meta learners when applied in meta-train time, which aligns with observations from \cite{aug_meta2021ni,support2020setlur}.
To empirically demonstrate it, we apply several active learning methods without stratification in the meta-train time for ProtoNet on MiniImageNet.
\cref{app:fig:training_time} shows among Random, DPP and GMM selections, one is not significantly better than another although the Entropy selection is significantly worse than them.
\end{minipage}
\begin{minipage}[t]{0.37\textwidth}
\centering
\includegraphics[width=1.0\textwidth,valign=t]{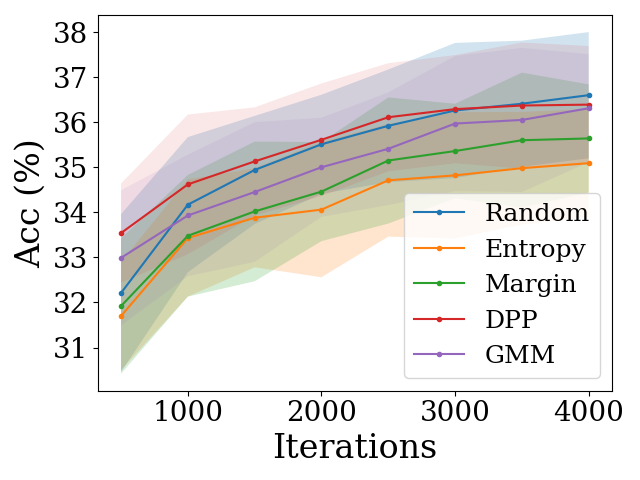}
\vspace*{-4.5mm}
\caption{Comparison of $\Pick_\theta^\train$.}
\label{app:fig:training_time}
\end{minipage}%
\end{figure}

\end{document}